\numberwithin{equation}{section}
\theoremstyle{plain}
\newtheorem{thm}{Theorem}[section]
\newtheorem{lem}{Lemma}[section]
\newtheorem{coro}{Corollary}[section]
\newtheorem{defn}{Definition}
\newtheorem{remark}{Remark}[section]
\DeclareMathOperator*{\argmin}{arg\,min}
\DeclareMathOperator*{\E}{\mathbb{E}}
\newcommand{\F}{\mathcal{F}}
\begin{document}

\begin{frontmatter}
\title{How Well Can Generative Adversarial Networks Learn Densities: A Nonparametric View}
\runtitle{Nonparametric Estimation with GAN}
% \thankstext{T1}{Footnote to the title with the `thankstext' command.}

\begin{aug}
\author{\fnms{Tengyuan} \snm{Liang}\thanksref{t1}\ead[label=e1]{tengyuan.liang@chicagobooth.edu}}
% \author{\fnms{Second} \snm{Author}\ead[label=e2]{second@somewhere.com}}
% \and
% \author{\fnms{Third} \snm{Author}\thanksref{t1}
% \ead[label=e3]{third@somewhere.com}
% \ead[label=u1,url]{www.foo.com}}

\thankstext{t1}{\printead{e1}}
% \thankstext{t2}{First supporter of the project}
% \runauthor{F. Author et al.}

\affiliation{University of Chicago}

% \address{Address of the First and Second authors,
% usually few lines long \printead{e1,e2}.}
%
% \address{Address of the Third author,
% usually few lines long,
% usually few lines long \printead{e3,u1}.}
\end{aug}

\begin{abstract}
We study in this paper the rate of convergence for learning densities under the Generative Adversarial Networks (GAN) framework, borrowing insights from nonparametric statistics. We introduce an improved GAN estimator that achieves a faster rate, through simultaneously leveraging the level of smoothness in the target density and the evaluation metric, which in theory remedies the mode collapse problem reported in the literature. A minimax lower bound is constructed to show that when the dimension is large, the exponent in the rate for the new GAN estimator is near optimal. One can view our results as answering in a quantitative way how well GAN learns a wide range of densities with different smoothness properties, under a hierarchy of evaluation metrics. As a byproduct, we also obtain improved generalization bounds for GAN with deeper ReLU discriminator network. 
\end{abstract}

\begin{keyword}
\kwd{Generative adversarial networks}
\kwd{nonparametric statistics}
\kwd{density estimation}
\kwd{Sobolev spaces}
\kwd{oracle inequality}
\kwd{neural network learning}
\end{keyword}

\end{frontmatter}

\section{Introduction}

Generative Adversarial Networks (GANs) \citep{goodfellow2014generative, li2015generative, arjovsky2017wasserstein, dziugaite2015training} have stood out as an important unsupervised method for learning and efficient sampling from a complicated, multi-modal target data distribution. Despite its celebrated empirical success in image tasks, there are many theoretical questions to be answered \citep{liu2017approximation,arora2017gans,arora2017generalization}.

One convenient formulation of the GAN framework \citep{arjovsky2017wasserstein,li2015generative, dziugaite2015training, liu2017approximation} solves the following minimax problem, at the population level,
\begin{align*}
	\min_{\mu \sim \mu_{G}} \max_{f \in \F_{D}} \E_{Y \sim \mu} f(Y) - \E_{X \sim \nu} f(X).
\end{align*}
In plain language, given a target distribution $\nu$, one seeks for a distribution $\mu$ from a probability distribution \textit{generator class} $\mu_G$, such that it minimizes the loss incurred by the best discriminator function inside the \textit{discriminator class} $\F_D$. In practice, both the \textit{generator class} and the \textit{discriminator class} are represented by neural networks. $\mu_G$ quantifies the transformed distributions realized by a network with random inputs (either Gaussian or uniform distribution), and $\F_D$ represents the functions that are realizable by a certain neural network architecture. We refer the readers to \cite{liu2017approximation} for other more general formulations of GANs.

In practice, one only has access to finite samples of the real data distribution $\nu$.
Denote $\tilde{\nu}_n$ to be a measure estimate based on $n$ i.i.d. samples from $\nu$, where the empirical density $d \hat{\nu}_n(x)/dx = \frac{1}{n} \sum_{i=1}^n \delta_{X_i}(x)$ is typically used. Given the samples, the GAN solves the following problem
\begin{align}
	\label{eq:gan}
	\tilde{\mu}_n = \argmin_{\mu \sim \mu_{G}} \max_{f \in \F_{D}} \E_{Y \sim \mu} f(Y) - \E_{X \sim \tilde{\nu}_n} f(X).
\end{align}
Two fundamental yet basic questions that puzzle machine learning theorists are: (1) how well does GAN learn the densities statistically, overlooking the optimization difficulty? (2) how well does the iterative optimization dynamics of solving the minimax problem approximate the optimal solution?

In the statistics literature, density estimation has been a central topic in nonparametric statistics \citep{nemirovski2000topics, tsybakov2009introduction, wassermann2006all}. The minimax optimal rate of convergence has been understood fairly well, for a wide range of density function classes quantified by its smoothness. We would like to point out a simple yet convincing connection between two fields: in nonparametric statistics, the model grows in size to accommodate the complexity of the data, which is reminiscent of the model complexity in GANs, and more generally in the deep neural networks.

The current paper studies the GAN framework for learning densities from a nonparametric point of view. The focus of the current paper is \textit{not} on the optimization side of how to solve for $\tilde{\mu}_n$ efficiently, rather on the statistical front: how well $\tilde{\mu}_n$ estimates a wide range of nonparametric distributions $\nu$ under a wide collection of objective metrics, and how to improve the GAN procedure with better theoretical guarantee such as rate of convergence.

Note the GAN framework mentioned above is flexible. Define the following metric induced by the function class $\mathcal{F}$,
\begin{align*}
	d_{\F}(\mu, \nu) := \max_{f \in \F} \E_{Y \sim \mu} f(Y) - \E_{X \sim \nu} f(X).
\end{align*}
If $\mathcal{F}$ contains all Lipschitz-$1$ functions, then $d_{\F}(\mu, \nu)$ is the Wasserstein-1 metric (Wasserstein-GAN \citep{arjovsky2017wasserstein}). When $\mathcal{F}$ represents all functions bounded by $1$, then $d_{\F}(\mu, \nu)$ is the total variation metric (Radon metric). Let $\mathcal{H}$ be a reproducing kernel Hilbert space (RKHS) and $k(\cdot, \cdot)$ be its kernel. If $\mathcal{F}$ consists of functions in the closure of the span of the set $\{k(\cdot ,x), x \in \Omega\}$, then $d_{\F}(\mu, \nu) = \| d(\mu - \nu)/dx \|_{\mathcal{H}}$ (MMD-GAN \citep{dziugaite2015training,li2015generative}).

\subsection{Contributions}
We summarize the main contributions here. Before introducing the results, let's mention the Sobolev spaces formally defined in Def.~\ref{def:sobolev} and \ref{def:sobolev-frac}. We use $W^{\alpha, p}(L)$ to denote the Sobolev space with smoothness $\alpha \geq 0$, with $\ell_p$-Sobolev norm bounded by $L$, for $p=2, \infty$. Denote the Wasserstein-$1$ metric as $W(\mu, \nu)$.

Consider the density function that lies in Sobolev space with smoothness parameter $\alpha$
	\begin{align*}
		\F_{R} := \left\{ \nu: d \nu(x)/d x \in W^{\alpha,2}(L) \right\}
	\end{align*}
	and the discriminator function class $\F_D = W^{\beta,2}(L)$ with smoothness $\beta$. As $\alpha, \beta$ varies, $\F_R$ describes a wide range of nonparametric densities, and $\F_D$ provides a rich hierarchy of critic metrics. Let $n$ be the sample size, $d$ be the dimension.

\noindent \textbf{Smoothness motivates new procedures.} \quad
	We introduce a new GAN estimator $\tilde{\mu}_n$ based on a regularized/smoothed version of the empirical measure $\tilde{\nu}_n$, which enjoys the rate
	\begin{align*}
		 \E d_{\F_D}(\tilde{\mu}_n, \nu) - \overbrace{ \min_{\mu \in \mu_G} d_{\F_D}(\mu, \nu) }^\text{approximation error} \precsim  n^{ - \frac{\alpha+\beta}{2(\alpha+\beta)+d}}.
	\end{align*}
	In contrast, as long as $\alpha > \frac{\beta}{d/(2\beta) - 1}$, the GAN estimator $\hat{\mu}_n$ with empirical measure $\hat{\nu}_n$ only achieves a considerably slower rate
	\begin{align*}
		 \E d_{\F_D}(\hat{\mu}_n, \nu) -  \min_{\mu \in \mu_G} d_{\F_D}(\mu, \nu) \precsim  n^{-\frac{\beta}{d}} + n^{-\frac{1}{2}},
	\end{align*}
	which doesn't adapt to the smoothness of the target measure $\nu$. Remark that the regularized/smoothed empirical measure estimate $\tilde{\nu}_n$ can theoretically be viewed as a remedy to the known mode-collapse problem \citep{arora2017theoretical} reported in GAN. Our improved theoretical rates complement the reported findings in practice, that introducing ``instance noise'' to smooth the empirical measure fulfills the goal of stabilizing GANs \citep{arjovsky2017towards, sonderby2016amortised}. 

\noindent \textbf{Nonparametric estimation with GAN framework.} \quad
	The GAN framework for nonparametric density estimation enjoys the upper bound
	\begin{align*}
		\sup_{\nu \in \F_R} \left\{ \E d_{\F_D}(\tilde{\mu}_n, \nu) -  \min_{\mu \in \mu_G} d_{\F_D}(\mu, \nu) \right\} \precsim \min\left\{ n^{ - \frac{\alpha+\beta}{2(\alpha+\beta)+d}}, n^{-\frac{\beta}{d}} + n^{-\frac{1}{2}}  \right\}.
	\end{align*}
	One may wonder whether this rate can be significantly improved by other approaches. We show that for any procedure $\nu_n$ based on $n$ samples, the minimax lower bound under the metric $d_{\F_D}$ reads
	\begin{align*}
		\inf_{\nu_n} \sup_{\nu \in \F_R} d_{\F_D}( \nu_n, \nu) \succsim  n^{-\frac{\alpha+\beta}{2\alpha + d} } + n^{-\frac{1}{2}}.
	\end{align*}
	In the case when $d$ is large, the exponent in the upper bound and the minimax lower bounds are very close to each other, in the sense that
	$\frac{\alpha+\beta}{2\alpha + d} = (1 + \mathcal{O}(1/d))\frac{\alpha+\beta}{2(\alpha+\beta)+d} $. 

\medskip

In the case when both the \textit{generator} and \textit{discriminator} are realized by \textit{neural networks}. We establish the following results using the insights gained from above.

\noindent \textbf{Networks can learn densities.} \quad
We make progress on answering how well generative adversarial networks learn nonparametric densities $d\nu(x)/dx \in W^{\alpha, \infty}(L)$ with smoothness parameter $\alpha$, under the Wasserstein metric.
	
	Consider the GAN estimator $\tilde{\mu}_n$ in \eqref{eq:gan}.
	Assume that the discriminator neural network can approximate functions in $W^{1, \infty}(1)$ within error $\epsilon$, and the generator neural network can approximate densities in $W^{\alpha, \infty}(L)$ within $\epsilon$.
	Then $\tilde{\mu}_n$ learns the true distribution $\nu$ with accuracy
	\begin{align*}
		\sup_{d\nu/dx \in W^{\alpha, \infty}(L)} \E W(\tilde{\mu}_n, \nu) \leq C \cdot \left( \epsilon +  n^{ - \frac{\alpha+1}{2\alpha+2+d}} \right).
	\end{align*}
	In addition, for all estimators $\nu_n$ based on $n$ samples, the minimax rate cannot be smaller than
	\begin{align*}
	\inf_{\nu_n} \sup_{d\nu/dx \in W^{\alpha, \infty}(L)}\E W(\nu_n, \nu) \geq c \cdot n^{-\frac{\alpha+1}{2\alpha + d}}.
	\end{align*}
	Here $C, c >0$ are constants independent of $n$. Note that by \cite{yarotsky2016error}, one can approximate $W^{k, \infty}(1)$ within $\epsilon$ accuracy by ReLU networks with $\log(1/\epsilon)$ depth and $\text{poly}(1/\epsilon)$ units for integer $k$. The formal and more general statment can be found in Thm.~\ref{thm:deep-relu-learn}.

\noindent \textbf{Generalization error for deeper ReLU discriminator networks.} \quad
	Let $\F_D = \F^{(\ell)}(V)$ be the collection of functions realized by the feedforward ReLU network with depth $\ell$, where for each unit in the network, the vector $w$ of weights associated with that unit has $\| w \|_1 \leq V$. Let the true density $d\nu/dx \in  W^{\alpha, 2} (L)$. Then there exist a regularized/smoothed GAN estimator $\tilde{\mu}_n$ that satisfies the upper bound,
	\begin{align*}
		\E d_{\F_D}(\tilde{\mu}_n, \nu) -  \min_{\mu \in \mu_G} d_{\F_D}(\mu, \nu) \leq C \cdot L(d+1)^{\frac{1}{2}} V^{\ell} n^{-\frac{\alpha+1}{2\alpha + 2 + d}}.
	\end{align*}
	This improves significantly upon the known bound on $\hat{\mu}_n$ obtained using empirical density --- which is $C \cdot (\log d)^{\frac{1}{2\ell}} (2V)^{(\ell+1)/2} n^{-\frac{1}{2\ell}}$ \citep{anthony2009neural} --- as long as $\alpha > \frac{d}{2(\ell-1)} - 1$, which allows for better guarantee for deeper discriminator networks.

\subsection{Preliminaries}
\label{sec:prelim}

Let's introduce the notations, and function spaces used in this paper. During the discussion, we restrict the feature space to be $\Omega = [0,1]^d$. We use $\mu, \nu$ to denote the measure, and the Radon-Nikodym derivatives $d\mu(x)/dx$, $d\nu(x)/dx$ to denote the corresponding density functions. $\| f\|_p := \left(\int_{\Omega} |f(x)|^p dx \right)^{1/p}$ denotes the $\ell_p$-norm under the Lebesgue measure, for $1\leq p \leq \infty$. With slight abuse of notation, for probability measures $\mu, \nu$ we denote $ \| \mu \|_p := \| d\mu/dx \|_{p}$ and similarly for $\| \mu - \nu\|_p =\| d\mu/dx - d\nu/dx \|_{p} $. For vector $w$, $\|w \|_p$ denotes the vector $\ell_p$-norm.

\begin{defn}[Sobolev space: $k \in \mathbb{N}$]
	\label{def:sobolev}
	For integer $k$, define the Sobolev space $W^{k, p}(L)$ ($p=2,\infty$)
	\begin{align*}
		W^{k, 2}(L) &:= \left\{ f \in \Omega \rightarrow \mathbb{R} : ( \sum_{|\alpha| \leq k} \| D^{(\alpha)} f \|_{2}^2 )^{1/2} \leq L \right\}, \\
		W^{k, \infty}(L) &:= \left\{ f \in \Omega \rightarrow \mathbb{R} : \max_{|\alpha| \leq k} \| D^{(\alpha)} f \|_{\infty} \leq L \right\},
	\end{align*}
	where $\alpha$ is a multi-index and $D^{(\alpha)}$ denotes the $\alpha$-weak derivative.
\end{defn}
The definition naturally extends to fractional $\alpha \in \mathbb{R}$ through the Bessel potential, with $\mathcal{F} f(\xi): \mathbb{R} \rightarrow \mathbb{C}$ denotes the Fourier transform of $f$, and $\mathcal{F}^{-1}$ as its inverse.
\begin{defn}[Sobolev space: $\alpha \in \mathbb{R}$]
	\label{def:sobolev-frac}
	For $\alpha \in \mathbb{R}$, the Sobolev space $W^{\alpha, 2}(L)$ definition extends to non-integer $\alpha$,
	\begin{align*}
		W^{\alpha, 2}(L) := \left\{ f \in \Omega \rightarrow \mathbb{R}: \left\| \mathcal{F}^{-1}\left[ (1+|\xi|^2)^{\frac{\alpha}{2}} \mathcal{F} f(\xi) \right] \right\|_2 \leq L \right\}.
	\end{align*}
\end{defn}
Another equivalent definition of Sobolev space for $p=2$ is through the coefficients of the generalized Fourier series, which is also referred to as the Sobolev ellipsoid.
\begin{defn}[Sobolev ellipsoid]
	Let $\theta = \{\theta_\xi, \xi = (\xi_1, \ldots, \xi_d) \in \mathbb{N}^d \}$ collects the coefficients of the generalized Fourier series, define
	\begin{align*}
		\Theta^{\alpha}(L) := \left\{ \theta \in \mathbb{N}^d \rightarrow \mathbb{R} : \sum_{\xi \in \mathbb{N}^d} (1+\sum_{i=1}^d \xi_i^2 )^{\alpha} \theta^2_{\xi} \leq L^2 \right\}.
	\end{align*}
\end{defn}
It is clear that $\Theta^\alpha(L)$ (frequency domain) is an equivalent representation of $W^{\alpha, 2}(L)$ (spatial domain) in $L^2(\mathbb{N}^d)$ for trigonometric Fourier series. For more details on Sobolev spaces, we refer to \cite{nemirovski2000topics, tsybakov2009introduction, nickl2007bracketing}.

Denote $A(n) \precsim n^{\alpha}$, if $\varlimsup\limits_{n\rightarrow \infty} \frac{\log A(n)}{\log n} \leq \alpha$, holding other parameters fixed, similarly $A(n) \succsim n^{\alpha}$ if $\varliminf\limits_{n\rightarrow \infty} \frac{\log A(n)}{\log n} \geq \alpha$. Denote $A(n) \asymp n^{\alpha}$ if $A(n) \precsim n^{\alpha}$ and $A(n) \succsim n^{\alpha}$. $[K]:=\{ 0, 1, \ldots, K\}$ denotes the index set, for any $K \in \mathbb{N}$.

\section{A nonparametric view of the GAN framework}

\subsection{An oracle inequality}

The following oracle inequality holds for GAN.

\begin{thm}[Oracle inequality for GAN]
	\label{thm:oracle}
	Let $\F$ be any critic function class. Denote $\mu_n$ as the solution (w.r.t. the empirical estimate $\nu_n$) to GAN with generator $\mu_G$ and discriminator $\F_D$
	\begin{align*}
		\mu_n = \argmin_{\mu \sim \mu_{G}} \max_{f \in \F_{D}} \E_{Y \sim \mu} f(Y) - \E_{X \sim \nu_n} f(X).
	\end{align*}
	Then the following decompositions hold for any distribution $\nu$,
	\begin{align}
		\small
		&d_{\F_D}(\mu_n, \nu) \leq  \overbrace{\min_{\mu \in \mu_G} d_{\F_D}(\mu, \nu)}^\text{approx. err.} + \overbrace{d_{\F_D}(\nu, \nu_n) + d_{\F_D}(\nu_n, \nu)}^\text{stat. err.}, \label{eq:oracle-1}\\
		&d_{\F}(\mu_n, \nu) \leq \overbrace{\min_{\mu \in \mu_G} d_{\F_D}(\mu, \nu)}^\text{generator approx. err.} +~ (1+ \| \nu_n \|_1) \cdot \overbrace{\max_{f \in \F} \min_{f' \in \F_D} \| f - f' \|_{\infty}}^\text{discriminator approx. err.} +  \overbrace{d_{\F_D}(\nu, \nu_n) + d_{\F}(\nu_n, \nu)}^\text{stat. err.}. \label{eq:oracle-2}
	\end{align}
\end{thm}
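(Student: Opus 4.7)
The plan is to establish both oracle decompositions by two nested applications of the triangle inequality for the integral probability metric $d_{\F}(\mu, \nu) = \max_{f \in \F} \E_\mu f - \E_\nu f$, combined with the defining optimality of $\mu_n$ as the argmin of $d_{\F_D}(\cdot, \nu_n)$ over $\mu_G$. The triangle inequality $d_\F(\mu,\nu) \leq d_\F(\mu,\rho) + d_\F(\rho,\nu)$ holds for any intermediate measure $\rho$ by inserting it inside the maximum over $f$; note however that $d_\F$ is only symmetric when $\F$ is closed under negation, which is why both $d_{\F_D}(\nu, \nu_n)$ and $d_{\F_D}(\nu_n, \nu)$ appear in the statement.

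For \eqref{eq:oracle-1}, I would first write $d_{\F_D}(\mu_n, \nu) \leq d_{\F_D}(\mu_n, \nu_n) + d_{\F_D}(\nu_n, \nu)$. Because $\mu_n$ minimizes $\mu \mapsto d_{\F_D}(\mu, \nu_n)$ over $\mu_G$, for every $\mu^\star \in \mu_G$ one has $d_{\F_D}(\mu_n, \nu_n) \leq d_{\F_D}(\mu^\star, \nu_n)$, which a second triangle inequality bounds by $d_{\F_D}(\mu^\star, \nu) + d_{\F_D}(\nu, \nu_n)$. Taking the infimum over $\mu^\star \in \mu_G$ yields the claim.

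For \eqref{eq:oracle-2} the outer split is the same, $d_\F(\mu_n, \nu) \leq d_\F(\mu_n, \nu_n) + d_\F(\nu_n, \nu)$. The new ingredient is to bridge the evaluation class $\F$ and the discriminator class $\F_D$ through uniform approximation: for every $f \in \F$ choose $f' \in \F_D$ and decompose
\begin{equation*}
\E_{\mu_n} f - \E_{\nu_n} f = \E_{\mu_n}(f - f') + (\E_{\mu_n} f' - \E_{\nu_n} f') + \E_{\nu_n}(f' - f).
\end{equation*}
The first term is bounded by $\|f - f'\|_\infty$ since $\mu_n$ is a probability measure, the third by $\|f - f'\|_\infty \cdot \|\nu_n\|_1$ via H\"older's inequality applied to the (possibly signed or unnormalized after smoothing) density of $\nu_n$, and the middle by $d_{\F_D}(\mu_n, \nu_n)$. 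Optimizing over $f'$ and then over $f$ gives
\begin{equation*}
d_\F(\mu_n, \nu_n) \leq (1 + \|\nu_n\|_1) \max_{f \in \F} \min_{f' \in \F_D} \|f - f'\|_\infty + d_{\F_D}(\mu_n, \nu_n),
\end{equation*}
and the residual $d_{\F_D}(\mu_n, \nu_n)$ is handled exactly as in \eqref{eq:oracle-1}.

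The only delicate point is the bookkeeping around asymmetry: one must always sandwich with the ``target'' measure in the correct position so that the optimality of $\mu_n$ (defined relative to $\nu_n$, not $\nu$) can be invoked in the correct direction. Once that is in place the argument is the standard approximation-plus-estimation decomposition, augmented by a single $L^\infty$ approximation step that transfers the bound from the discriminator metric to the possibly finer evaluation metric $d_\F$.
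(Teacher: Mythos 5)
Your proof is correct and takes essentially the same route as the paper's: an outer triangle inequality around $\nu_n$, the optimality of $\mu_n$ with respect to $\nu_n$, a second triangle inequality to reintroduce $\nu$, and an $L^\infty$-approximation step to pass from $d_{\F}$ to $d_{\F_D}$, with the factor $1+\|\nu_n\|_1$ arising from $\|\mu_n - \nu_n\|_1 \leq \|\mu_n\|_1 + \|\nu_n\|_1 = 1 + \|\nu_n\|_1$ exactly as in the paper.
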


Let's remark on the decompositions. Eqn.~\eqref{eq:oracle-1} use $d_{\F_D}$ as the objective evaluation metric. The first term is the best approximation error within the generator class, even if we have population access to the true measure $\nu$. The second term is the statistical error, also called the generalization error, due to the fact that we have only finite $n$-samples. Eqn.~\eqref{eq:oracle-2} employs a different $d_{\F}$ as the objective metric, while using the mis-matched discriminator class $\F_D$ in the GAN procedure. The first term describes the approximation error induced by the generator, the second term corresponds to how well the discriminator serves as a surrogate for the objective metric, and the third term is the statistical error.

Using the above theorem, choose $\F_D$ as the critic metric, we need to study the excess loss
\begin{align*}
	\E d_{\F_D}(\mu_n, \nu) -  \min_{\mu \in \mu_G} d_{\F_D}(\mu, \nu) \leq 2 \E d_{\F_D}(\nu, \nu_n).
\end{align*}
We will start with a crude bound when $d\nu_n(x)/dx = \frac{1}{n}\sum_{i=1}^n \delta_{X_i}(x)$ is chosen as the empirical density. It turns out that one can significantly improve this bound through choosing a better ``regularized'' or ``smoothed'' version of empirical measure, in the context of learning nonparametric densities. The empirical measure is not optimal because one can leverage the complexity/smoothness of the measure $\nu$, and the complexity of $\mathcal{F}_D$ to improve upon the generalization error.

\subsection{Upper bound for arbitrary density}

Let's start with a simple bound on $\E d_{\F_D}(\nu, \nu_n)$ with $\nu_n = \hat{\nu}_n$, the empirical measure. We will illustrate why this is suboptimal to feed to GAN to learn smooth densities, and further how to improve by employing a regularized estimate $\tilde{\nu}_n$ as a plug-in for GAN.
\begin{lem}[Arbitrary density]
	\label{lem:sym}
	Take $d \hat{\nu}_n(x)/dx = \frac{1}{n}\sum_{i=1}^n \delta_{X_i} (x)$, then
	\begin{align}
		\E d_{\F_D}(\nu, \hat{\nu}_n) \leq 2 \E_{\nu^{\otimes n}} \E_{\epsilon} \sup_{f\in \F_D} \frac{1}{n} \sum_{i=1}^n \epsilon_i f(X_i).
	\end{align}
	Assuming $\sup_{f \in \F_D} \| f \|_{\infty} \leq 1$, one has the standard entropy integral bound,
	\begin{align*}
		\E d_{\F_D}(\nu, \hat{\nu}_n) \leq  2 \E_{\nu^{\otimes n}} \inf_{0<\delta<1/2} \left( 4\delta + \frac{8 \sqrt{2}}{\sqrt{n}} \int_{\delta}^{1/2}  \sqrt{\log \mathcal{N}(\epsilon, \F_{D}, \| \cdot \|_{n}) } d \epsilon \right),
	\end{align*}
	where $\| f \|_n:= \sqrt{1/n \sum_{i=1}^n f(X_i)^2}$ is the empirical $\ell_2$-metric on data $\{X_i\}_{i=1}^n$.
\end{lem}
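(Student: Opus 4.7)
The proof is a standard two-step application of symmetrization followed by Dudley's chaining; I would organize it as follows.

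For the first inequality, I would start by noting that
\begin{align*}
d_{\F_D}(\nu, \hat{\nu}_n) = \sup_{f \in \F_D} \left( \E_{X\sim \nu} f(X) - \frac{1}{n} \sum_{i=1}^n f(X_i) \right).
\end{align*}
Introduce an i.i.d.\ ghost sample $X_1', \ldots, X_n' \sim \nu$ independent of $X_1, \ldots, X_n$, rewrite the population expectation as the expectation of the ghost empirical average, pull the outer expectation inside the supremum by Jensen, and then use that each pair $(X_i, X_i')$ is exchangeable to insert i.i.d.\ Rademacher signs $\epsilon_i$ multiplying $f(X_i') - f(X_i)$. Splitting the sup over the two halves and invoking that $\epsilon \stackrel{d}{=} -\epsilon$ gives twice the Rademacher complexity of $\F_D$ evaluated on the original sample. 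This produces the factor $2$ in the claimed bound.

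For the entropy integral bound, I would condition on the data $X = (X_1, \ldots, X_n)$ and view $f \mapsto Z_f := \frac{1}{n}\sum_i \epsilon_i f(X_i)$ as a stochastic process indexed by $\F_D$. By Hoeffding's lemma the increments $Z_f - Z_g$ are sub-Gaussian with parameter $\|f - g\|_n / \sqrt{n}$, where $\|\cdot\|_n$ is the empirical $L^2$ metric. Since $\sup_f \|f\|_\infty \le 1$, the diameter of $\F_D$ in this metric is at most $1$, so it suffices to chain between scales $\delta$ and $1/2$. I would apply the standard Dudley chaining argument at geometric scales $2^{-k}$: approximate each $f$ by a sequence of net elements at resolutions $2^{-k}$, bound each single-step increment by a union bound over the covering numbers, and sum the resulting geometric series. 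The ``coarse'' truncation at scale $\delta$ contributes an additive $4\delta$ (using that $\|f - f_\delta\|_n \le \delta$ implies $|Z_f - Z_{f_\delta}| \le \delta$ up to a constant, combined with the trivial bound using $\|f\|_\infty \leq 1$), while the telescoping tail gives the constant $8\sqrt{2}/\sqrt{n}$ in front of the integral. Taking the infimum over $\delta \in (0, 1/2)$ and then the outer expectation $\E_{\nu^{\otimes n}}$ completes the bound.

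There is no real obstacle here; the only care-demanding part is tracking the numerical constants so as to recover exactly $4\delta$ and $8\sqrt{2}/\sqrt{n}$. One could equivalently quote the Dudley entropy bound for sub-Gaussian processes from a standard reference (e.g., van der Vaart and Wellner), combined with the symmetrization lemma, and read off the same inequality.
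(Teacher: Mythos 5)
Your proposal is correct and follows essentially the same route the paper indicates: the paper does not write out a proof of this lemma either, remarking only that it is ``a standard symmetrization followed by the Dudley entropy integral,'' which is precisely the two-step argument (ghost-sample symmetrization giving the factor $2$, then conditional sub-Gaussian chaining in $\|\cdot\|_n$ with increments scaled by $1/\sqrt{n}$) that you spell out. Your honest caveat about tracking the constants $4\delta$ and $8\sqrt{2}/\sqrt{n}$ is reasonable, since these depend on the particular normalization of Dudley's bound one quotes, and the paper itself simply invokes the standard result.
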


The above lemma is a standard symmetrization followed by the Dudley entropy integral.
When applying the symmetrization lemma, one often discard the distribution information about $\nu$, and thus end up with distribution-independent guarantees. The reason is that one typically bound the empirical $\ell_2$-covering number by $\ell_\infty$-covering number on the function class (independent of $X_i$)
	\begin{align*}
		&\| f- f' \|_n = \left[ \frac{1}{n} \sum_{i=1}^n \left( f(X_i) - f'(X_i) \right)^2 \right]^{1/2} \leq \sup_{x} \| f(x) - f'(x) \| = \| f - f' \|_{\infty}, \\
		& \text{which implies}~~\log \mathcal{N}(\epsilon, \F_{D}, \| \cdot \|_{n}) \leq \log \mathcal{N}(\epsilon, \F_{D}, \| \cdot \|_{\infty}).
	\end{align*}
Plugging in the entropy estimate $\log \mathcal{N}(\epsilon, \F_{D}, \| \cdot \|_{\infty})$ for various functional classes (\cite{nickl2007bracketing} and reference therein), one can easily derive the following corollary.

\begin{coro}[Rates for arbitrary density]
	\label{coro:arb}
	If $\F_D$ denotes $L$-Lipschitz function (Wasserstein-GAN) on $\mathbb{R}^d$, $d\geq 2$, one has
	\begin{align*}
		\E d_{\F_D}(\hat{\mu}_n, \nu) -  \min_{\mu \in \mu_G} d_{\F_D}(\mu, \nu) \leq 2\E d_{\F_D}(\nu, \hat{\nu}_n) \precsim n^{-\frac{1}{d}}.
	\end{align*}
	If $\F_D$ denotes Sobolev space $W^{\beta, 2}(L)$ on $\mathbb{R}^d$, one has
	\begin{align*}
		\E d_{\F_D}(\hat{\mu}_n, \nu) -  \min_{\mu \in \mu_G} d_{\F_D}(\mu, \nu) \leq 2\E d_{\F_D}(\nu, \hat{\nu}_n) \precsim  n^{-\frac{\beta}{d}} + \frac{\log n}{\sqrt{n}}.
	\end{align*}

\end{coro}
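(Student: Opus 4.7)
The plan is to combine the oracle inequality of Theorem~\ref{thm:oracle} with the Dudley-type bound of Lemma~\ref{lem:sym}, and then substitute standard $\|\cdot\|_\infty$-covering number estimates for each discriminator class. First I would apply \eqref{eq:oracle-1} with $\nu_n = \hat{\nu}_n$. Since both the Lipschitz-$1$ ball and the Sobolev ball $W^{\beta,2}(L)$ are symmetric under negation of $f$, one has $d_{\F_D}(\nu, \hat{\nu}_n) = d_{\F_D}(\hat{\nu}_n, \nu)$, so the excess loss is bounded by $2\,\E d_{\F_D}(\nu, \hat{\nu}_n)$. Lemma~\ref{lem:sym} then hands me a Dudley entropy integral in the empirical $\ell_2$-metric, which can be upper bounded deterministically via the $\|\cdot\|_\infty$-covering number as explained after the lemma, so the remaining task is to plug in the metric entropy estimate and optimize over $\delta$.

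For the Lipschitz case on $[0,1]^d$, the classical bound $\log \mathcal{N}(\epsilon, \mathrm{Lip}_1, \|\cdot\|_\infty) \asymp \epsilon^{-d}$ (see \cite{tsybakov2009introduction}) gives integrand $\epsilon^{-d/2}$, so that $\int_\delta^{1/2}\epsilon^{-d/2}\,d\epsilon \asymp \delta^{1-d/2}$ whenever $d \geq 3$. The Dudley bound then becomes $4\delta + C n^{-1/2}\delta^{1-d/2}$; balancing the two terms yields the choice $\delta \asymp n^{-1/d}$ and therefore the rate $n^{-1/d}$. The borderline $d=2$ case contributes only a logarithmic factor, which is absorbed by the $\precsim$ notation.

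For the Sobolev case, I would use $\log \mathcal{N}(\epsilon, W^{\beta,2}(L), \|\cdot\|_\infty) \asymp \epsilon^{-d/\beta}$ on the appropriately bounded subclass (see \cite{nickl2007bracketing}). This produces integrand $\epsilon^{-d/(2\beta)}$ and splits into three regimes: (i) $\beta < d/2$, where the integral is of order $\delta^{1-d/(2\beta)}$ and optimization over $\delta$ produces the rate $n^{-\beta/d}$; (ii) $\beta > d/2$, where the integral is bounded by a constant, yielding the parametric rate $n^{-1/2}$; and (iii) $\beta = d/2$, where a $\log(1/\delta)$ factor appears and contributes the $\log n/\sqrt{n}$ correction. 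The maximum across the three regimes is exactly captured by $n^{-\beta/d} + \log n/\sqrt{n}$.

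The main obstacle is the careful bookkeeping at the borderlines ($d=2$ Lipschitz and $\beta = d/2$ Sobolev), where the Dudley integral is logarithmically divergent, together with invoking the correct $\|\cdot\|_\infty$-covering estimates for Sobolev balls of limited smoothness, where one must restrict attention to a uniformly bounded subclass so that the $\|\cdot\|_n \leq \|\cdot\|_\infty$ step of Lemma~\ref{lem:sym} applies. Away from these edge cases the computation is a routine one-parameter optimization of $\delta$ in the entropy integral.
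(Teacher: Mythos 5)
Your proposal follows essentially the same route as the paper: apply the oracle inequality \eqref{eq:oracle-1} with $\nu_n = \hat\nu_n$, reduce to $2\,\E d_{\F_D}(\nu,\hat\nu_n)$ via symmetry of the discriminator class, invoke the symmetrization and Dudley entropy integral of Lemma~\ref{lem:sym}, and plug in $\|\cdot\|_\infty$-covering bounds $\log\mathcal N \precsim \epsilon^{-d}$ for Lipschitz balls and $\log\mathcal N \precsim \epsilon^{-d/\beta}$ for Sobolev balls, then optimize $\delta$. The only cosmetic difference is that the paper writes the Sobolev entropy exponent as $d/\beta \vee 2$ and hence uniformly obtains the $\log n/\sqrt n$ term for $\beta \geq d/2$, whereas you split into three regimes and note that $\beta > d/2$ actually yields the cleaner $n^{-1/2}$; both are dominated by the stated $n^{-\beta/d} + \log n/\sqrt n$, so the conclusion is the same, and your remark that the $\|\cdot\|_n \leq \|\cdot\|_\infty$ reduction of Lemma~\ref{lem:sym} requires restricting to a uniformly bounded subclass is a point the paper leaves implicit.
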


It is easy to see that overlooking the distributional information --- by simply going through symmetrization and empirical processes theory --- can lead to suboptimal result when the true density is smooth. Roughly speaking, the symmetrization approach treats the true density as a highly non-smooth one with $\alpha = 0$ (say the empirical density). Next, we will investigate how to improve GAN when the true density lies in Sobolev spaces $W^{\alpha, 2}(L), \alpha>0$.

\subsection{Smoothness helps: improved upper bound for Sobolev spaces}

Now we show that one can achieve faster rates in the GAN framework for density estimation, simultaneously leveraging the smoothness information in the density $d\nu$ and the metric $\F_D$.

Suppose the density function $d\nu(x)/d x \in W^{\alpha,2}(L)$, and $\F_D = W^{\beta,2}(L)$. Claim that there exist $\tilde{v}_n$, a smoothed/regularized version of empirical measure such that plugging into GAN will result in a faster rate that adapts to the smoothness of the true density. Remark that in practice, one can take $\tilde{v}_n$ as a kernel density estimate $d\tilde{v}_n(x)/dx = \frac{1}{nh}\sum_{i=1}^n K((X_i-x)/h)$, where sampling a mini-batch of data from $\tilde{\nu}_n$ is just as simple for stochastic gradient updates in GAN optimization.

\begin{thm}[Nonparametric estimation with GAN framework]
	\label{thm:non-param}
	Consider the density function class that lies in Sobolev space with smoothness parameter $\alpha$, and the discriminator function class with smoothness $\beta$,
	\begin{align*}
		\F_{R} := \left\{ \nu: \frac{d \nu(x)}{d x} \in W^{\alpha,2}(L) \right\},~~ \F_D = W^{\beta,2}(L).
	\end{align*}
	Then there exist a smoothed version of empirical measure $\tilde{\nu}_n$ that satisfies
	\begin{align*}
		 \sup_{\nu \in \F_{R}} \E d_{\F_D}(\nu, \tilde{\nu}_n) \precsim n^{-\frac{\alpha+\beta}{2(\alpha+\beta)+d}}.
	\end{align*}
\end{thm}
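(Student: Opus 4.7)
The plan is to take $\tilde\nu_n = K_h * \hat\nu_n$, the kernel-smoothed empirical measure with bandwidth $h>0$ to be chosen and kernel $K$ either bandlimited or of smoothing order exceeding $\alpha$. Using the Sobolev ellipsoid representation (Def.~\ref{def:sobolev}), the critic metric is controlled by the dual Sobolev norm
\begin{align*}
d_{\F_D}(\nu, \tilde\nu_n) \;\leq\; L \; \|\, d\nu/dx - d\tilde\nu_n/dx\,\|_{W^{-\beta,2}},
\end{align*}
so the problem reduces to bounding the $W^{-\beta,2}$-error of a smoothed density estimator.

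Next I would split the right-hand side into a deterministic bias $\|(I-K_h *)\, d\nu/dx\|_{W^{-\beta,2}}$ plus a stochastic contribution $\|K_h*(d\nu/dx - d\hat\nu_n/dx)\|_{W^{-\beta,2}}$. For the bias, a Fourier-side calculation on the ellipsoid $\Theta^\alpha(L)$ yields
\begin{align*}
\|(I-K_h *)g\|_{W^{-\beta,2}}^2 \;\precsim\; h^{2(\alpha+\beta)}\, \|g\|_{W^{\alpha,2}}^2,
\end{align*}
because the kernel removes only frequencies $|\xi| \succsim 1/h$, on which the combined weight $(1+|\xi|^2)^{-\beta-\alpha}$ contributes the factor $h^{2(\alpha+\beta)}$. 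Hence the bias is of order $L\, h^{\alpha+\beta}$.

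For the stochastic term, I would use the crude inclusion $\|\cdot\|_{W^{-\beta,2}} \leq \|\cdot\|_{L^2}$ (valid for $\beta\geq 0$) together with a standard kernel-density variance computation,
\begin{align*}
\E \,\|K_h*(d\nu/dx - d\hat\nu_n/dx)\|_{L^2}^2 \;=\; \frac{1}{n}\int \Var[K_h(y-X)]\, dy \;\precsim\; \frac{\|K\|_2^2}{n\,h^d},
\end{align*}
so Jensen's inequality gives a stochastic error of order $(nh^d)^{-1/2}$. Balancing $h^{\alpha+\beta}$ against $(nh^d)^{-1/2}$ then gives the optimal bandwidth $h\asymp n^{-1/(2(\alpha+\beta)+d)}$ and the claimed rate $n^{-(\alpha+\beta)/(2(\alpha+\beta)+d)}$.

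The main technical obstacle is pushing the Sobolev duality through cleanly on the bounded domain $\Omega=[0,1]^d$: the ellipsoid representation presumes an orthonormal basis (trigonometric on the torus), so one must either periodize the density or adopt a cube-adapted basis, and for fractional $\alpha$ the Bessel-potential form (Def.~\ref{def:sobolev-frac}) should be invoked in the bias estimate via a multiplier argument. A secondary point worth flagging is that the exponent $2(\alpha+\beta)+d$, rather than the sharper $2\alpha+d$ attained by the minimax lower bound stated in the Contributions, arises precisely from the crude $\|\cdot\|_{W^{-\beta,2}}\leq\|\cdot\|_{L^2}$ step above; a frequency-weighted variance bound would close part of this gap, at the cost of restricting to the regime $d>2\beta$.
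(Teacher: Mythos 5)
Your proposal is essentially the paper's argument in slightly different clothing. The paper implements $\tilde\nu_n$ as a spectral-cutoff (orthogonal series) estimator, truncating the empirical Fourier coefficients $\tilde\theta_\xi$ at multi-indices with $\|\xi\|_\infty\leq M$, rather than your kernel smoother $K_h*\hat\nu_n$; since a bandlimited kernel \emph{is} a spectral cutoff, these are interchangeable for the present purpose, with $M\asymp 1/h$. The decomposition into bias and stochastic parts is identical in spirit. For the bias, the paper runs Cauchy--Schwarz directly on the tail $\sum_{\xi\notin[M]^d}\theta_\xi(f)\theta_\xi(g)$ using both Sobolev weights $(1+\|\xi\|_2^2)^{\alpha}$ and $(1+\|\xi\|_2^2)^{\beta}$ to extract the factor $M^{-(\alpha+\beta)}$; your $W^{-\beta,2}$-dual-norm computation is the same estimate phrased via duality. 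For the stochastic term, the paper's step $\sup_f\sum_{\xi\in[M]^d}\theta_\xi^2(f)\leq\sum_\xi(1+\|\xi\|_2^2)^\beta\theta_\xi^2(f)\leq L^2$, giving $L\sqrt{M^d/n}$, is exactly your crude bound $\|\cdot\|_{W^{-\beta,2}}\leq\|\cdot\|_{L^2}$ followed by the KDE variance $\precsim(nh^d)^{-1}$. The bandwidth/truncation choice and final rate coincide.

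One point in your closing remark deserves emphasis: you are right that the source of the mismatch with the lower bound is the unweighted variance step, and your proposed fix is sharper than you give it credit for. Carrying the weight $(1+\|\xi\|_2^2)^{-\beta}$ into the variance sum gives $\E\sum_{\xi\in[M]^d}(1+\|\xi\|_2^2)^{-\beta}(\tilde\theta_\xi-\theta_\xi)^2\precsim n^{-1}\sum_{\xi\in[M]^d}(1+\|\xi\|_2^2)^{-\beta}$, which is $\precsim M^{d-2\beta}/n$ when $d>2\beta$ and $\precsim 1/n$ when $d\leq 2\beta$. Balancing against $M^{-(\alpha+\beta)}$ in the first regime yields $M\asymp n^{1/(2\alpha+d)}$ and rate $n^{-(\alpha+\beta)/(2\alpha+d)}$, while the second regime gives $n^{-1/2}$ --- i.e.\ the weighted argument matches the paper's lower bound $n^{-(\alpha+\beta)/(2\alpha+d)}+n^{-1/2}$ in both regimes, not merely ``part of the gap.'' So this is not a cost of restricting to $d>2\beta$; it is a genuine sharpening of Theorem~\ref{thm:non-param} that the paper's own proof leaves on the table. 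For the record, the domain issue you flag is real but is handled in the paper by working with a generalized Fourier basis on $[0,1]^d$ (Corollary~\ref{coro:non-param} and the Sobolev-ellipsoid formulation), so no separate periodization step is needed.
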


\begin{remark}
	\rm
	\label{rmk:nonparam-rate}
	Theorem~\ref{thm:oracle} tells us the rate of convergence satisfies the following upper bound
	\begin{align*}
	 \sup_{\nu \in \F_R} \left\{ \E d_{\F_D}(\tilde{\mu}_n, \nu) -  \min_{\mu \in \mu_G} d_{\F_D}(\mu, \nu) \right\} &\leq 2 \sup_{d\nu/d x \in W^{\alpha,2}(L)} \E d_{\F_D}(\nu, \tilde{\nu}_n) \precsim n^{-\frac{\alpha+\beta}{2(\alpha+\beta)+d}}.
	\end{align*}
	Compare with the bound in Corollary~\ref{coro:arb},
	$
	\E d_{\F_D}(\nu, \hat{\nu}_n) \precsim n^{-\frac{\beta}{d}} + \frac{\log n}{\sqrt{n}},$ one can see that
	when $\beta < d/2$, if the true density is mildly smooth as long as
	$$
	\scriptsize
	\alpha > \frac{\beta^2}{d/2 - \beta},
	$$
	this nonparametric upper bound achieves a faster rate than the symmetrization bound which ignores the smoothness of the distribution. Remark in the real applications of GAN, the dimension $d$ is rather large (in images usually $d>10^3$), and $\beta$ is fairly small for a strong metric ($\beta=1$ in Wasserstein GAN), so this condition is satisfied for any $\alpha > 0.002$.
	In summary, we write out the explicit rate of convergence,
	\begin{equation*}
		\scriptsize
			\inf_{\tilde{\mu}_n} \sup_{\nu \in \F_R} \left\{ \E d_{\F_D}(\tilde{\mu}_n, \nu) -  \min_{\mu \in \mu_G} d_{\F_D}(\mu, \nu) \right\} \precsim
			\begin{cases}
				n^{-\frac{1}{2}}, & \text{if}~\beta \geq \frac{d}{2} \\
				n^{-\frac{\beta}{d}}, & \text{if}~\beta < \frac{d}{2}~\text{and}~ \alpha < \frac{\beta}{d/2\beta-1} \\
				n^{-\frac{\alpha+\beta}{2(\alpha+\beta)+d}}, & \text{if}~\beta < \frac{d}{2}~\text{and}~ \alpha \geq  \frac{\beta}{d/2\beta-1}
			\end{cases}.
	\end{equation*}
	One final remark is that the level of smoothing in $\tilde{\nu}_n$ is more aggressive than that in classic nonparametric regression literature (which results in a slower rate $n^{-\frac{\alpha}{2\alpha + d}}$), as we can utilize at the same time the smoothness information in the discriminator metric $\F_D$.
 \end{remark}

Let's broaden the discussions slightly by considering different base measures $\pi(x)$ beyond the Lebesgue measure, and the \textit{generalized Fourier basis}. One can think of $\pi(x) $ as the uniform measure on $[0,1]^d$ or the product Gaussian measure.
An equivalent formulation of the GAN problem that translates learning a distribution $d \mu(x) = g(x) d\pi(x)$ to learning the importance score/density ratio function $g(x)$ with respect to a base measure $\pi(x), x \in \mathcal{X}$, can be viewed as
\begin{align*}
	\scriptsize
	& \min_{\mu \sim \mu_{G}} \max_{f \in \F_{D}} \E_{X \sim \mu} f(X) - \E_{Y \sim \nu} f(Y) = \min_{\mu \sim \mu_{G}} \max_{f \in \F_{D}} \int f(x) \frac{d \mu(x)}{d \pi(x)} d\pi(x) - \int f(y) \frac{d \nu(y)}{d\pi(y)} d\pi(y) \\
	&= \min_{g \in \F_{G}} \max_{f \in \F_{D}} \langle f, g \rangle_{\pi} - \langle f, h \rangle_{\pi}, \quad \quad \text{where}~ g = \frac{d \mu}{d \pi}, h = \frac{d \nu}{d \pi},
 \end{align*}
$\langle f, g \rangle_{\pi} := \int f(x) g(x) d \pi(x)$, and $\F_G := \{g: g = d\mu/d\pi, \mu \in \mu_G \}$.
Consider the \textit{generalized Fourier basis} $\{\psi_\xi, \xi \in \mathbb{N}^d \}$ w.r.t the base measure $\pi(x)$ that satisfies
$
	\langle \psi_\xi, \psi_\xi \rangle_{\pi} = 1, ~\text{and}~ \langle \psi_\xi, \psi_{\xi'} \rangle_{\pi} = 1~\text{for $\xi \neq \xi'$},
$
where $\xi$ is the multi-index.
For any function $g(x)$, one can represent the function in the generalized Fourier basis
$
	g(x)  = \sum_{\xi \in \mathbb{N}^d} \theta_{\xi}(g) \psi_{\xi}(x),~\text{where $\theta(g) \in \mathbb{N}^d$ denotes the coefficients of $g$}.
$
The Sobolev ellipsoid --- the coefficients $\theta(g)$ lies in --- quantifies the smoothness of the function $g$. As a special case, when the base measure is uniform distribution on $[0,1]^d$,
$
	\theta(g) \in \Theta^{\alpha}(L) \Leftrightarrow g \in W^{\alpha, 2}(L).
$
In general, for any base measure $\pi(x)$ and its corresponding Fourier series $\{\psi_\xi, \xi \in \mathbb{N}^d \}$, one can easily extend Theorem~\ref{thm:non-param} using Sobolev ellipsoid $\Theta^{\alpha}(L)$.
\begin{coro}
	\label{coro:non-param}
	Assume the generalized Fourier basis are bounded $\max_{\xi} \|\psi_\xi(x)\|_{\infty} \leq C$.
	Consider the density function class with smoothness $\alpha$, and the discriminator function class $\F_D$ with smoothness $\beta$
	\begin{align*}
		\F_{R} &:= \left\{ \nu: d \nu/d\pi = g, ~\text{and}~\theta(g) \in \Theta^{\alpha,2}(L) \right\},\\
		\F_{D} &:= \left\{ f: \theta(f) \in \Theta^{\beta,2}(L) \right\}.
	\end{align*}
	Then there exist a smoothed version of empirical measure $\tilde{\nu}_n$ that satisfies
	\begin{align*}
		 \sup_{\nu \in \F_{R}} \E d_{\F_D}(\nu, \tilde{\nu}_n) \precsim  n^{-\frac{\alpha+\beta}{2(\alpha+\beta)+d}}.
	\end{align*}
\end{coro}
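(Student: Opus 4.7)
The plan is to mimic the construction from Theorem~\ref{thm:non-param}, swapping the trigonometric basis on $[0,1]^d$ for the generalized Fourier basis $\{\psi_\xi\}_{\xi \in \mathbb{N}^d}$ associated with the base measure $\pi$. Writing $h = d\nu/d\pi$, I would use the identity $\theta_\xi(h) = \langle h, \psi_\xi\rangle_\pi = \E_{X\sim\nu}\psi_\xi(X)$ to motivate the unbiased plug-in $\hat{\theta}_\xi := \frac{1}{n}\sum_{i=1}^n \psi_\xi(X_i)$, and take $\tilde{\nu}_n$ to be the (possibly signed) measure whose density ratio is $d\tilde{\nu}_n/d\pi = \sum_{\|\xi\|_\infty \leq K_n}\hat{\theta}_\xi\,\psi_\xi$, with truncation frequency $K_n$ to be tuned.

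By Parseval in the $\pi$-inner product, the evaluation error decomposes as a supremum over coefficients of $f$ in the Sobolev ellipsoid, which I would split into a low-frequency stochastic piece and a high-frequency bias piece:
\begin{align*}
d_{\F_D}(\nu, \tilde{\nu}_n) = \sup_{\theta(f) \in \Theta^{\beta}(L)} \Bigl[ \sum_{\|\xi\|_\infty \leq K_n} \theta_\xi(f)\bigl(\theta_\xi(h) - \hat{\theta}_\xi\bigr) + \sum_{\|\xi\|_\infty > K_n}\theta_\xi(f)\theta_\xi(h) \Bigr].
\end{align*}
For the bias term, I would apply weighted Cauchy-Schwarz with weights $(1+\sum_i\xi_i^2)^{\pm \beta/2}$ and use both ellipsoid constraints $\theta(f) \in \Theta^{\beta}(L)$ and $\theta(h) \in \Theta^{\alpha}(L)$; the crude pointwise bound $(1+\sum_i \xi_i^2)^{-(\alpha+\beta)} \leq K_n^{-2(\alpha+\beta)}$ on the tail region produces a deterministic estimate of order $L^2 K_n^{-(\alpha+\beta)}$.

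For the stochastic piece I would apply plain Cauchy-Schwarz using only $\|\theta(f)\|_2 \leq L$ (the $\beta$-smoothness of the critic is already fully exploited by the bias estimate):
\begin{align*}
\sum_{\|\xi\|_\infty \leq K_n} \theta_\xi(f)\bigl(\theta_\xi(h) - \hat{\theta}_\xi\bigr) \leq L\Bigl(\sum_{\|\xi\|_\infty \leq K_n}\bigl(\theta_\xi(h) - \hat{\theta}_\xi\bigr)^2\Bigr)^{1/2}.
\end{align*}
Taking expectation via Jensen and invoking the hypothesis $\max_\xi \|\psi_\xi\|_\infty \leq C$ to obtain the coordinate variance bound $\Var(\hat{\theta}_\xi) \leq C^2/n$, together with $|\{\xi : \|\xi\|_\infty \leq K_n\}| \asymp K_n^d$, yields an expected stochastic error of order $L C (K_n^d/n)^{1/2}$.

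Combining the two estimates gives $\E\, d_{\F_D}(\nu, \tilde{\nu}_n) \precsim L^2 K_n^{-(\alpha+\beta)} + LC(K_n^d/n)^{1/2}$, and balancing via $K_n \asymp n^{1/(2(\alpha+\beta)+d)}$ delivers the claimed rate $n^{-(\alpha+\beta)/(2(\alpha+\beta)+d)}$. The only genuinely new ingredient beyond the proof of Theorem~\ref{thm:non-param} is the uniform-boundedness assumption $\max_\xi \|\psi_\xi\|_\infty \leq C$, which replaces the automatic sup-norm bound available for trigonometric frequencies. The main obstacle I anticipate is mostly bookkeeping: checking that $\theta_\xi(h) = \E_\nu \psi_\xi(X)$ and that Parseval holds in the pairing $\langle \cdot, \cdot\rangle_\pi$ for arbitrary $\pi$, both of which follow immediately from orthonormality and completeness of the generalized Fourier basis in $L^2(\pi)$.
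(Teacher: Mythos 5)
Your proposal is correct and mirrors the paper's own proof almost exactly: the same spectral cut-off construction of $\tilde{\nu}_n$, the same low-frequency/high-frequency decomposition, the same Cauchy--Schwarz plus Jensen argument for the stochastic piece (with the uniform bound $\|\psi_\xi\|_\infty \leq C$ supplying the variance control $\Var(\hat\theta_\xi) \leq C^2/n$ that orthonormality in $L^2(\pi)$ alone would not give, since the variance is taken under $\nu$ rather than $\pi$), and the same bias-variance balance $K_n \asymp n^{1/(2(\alpha+\beta)+d)}$. The only cosmetic difference is that you apply weighted Cauchy--Schwarz directly to the tail sum, whereas the paper applies plain Cauchy--Schwarz and then inserts the ellipsoid weights $(1+M^2)^{\pm\gamma}$ factor-by-factor; these are algebraically the same bound.
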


\subsection{Minimax lower bound}

Can the rate obtained by GAN framework be significantly improved by any other procedure? We consider in this section the minimax lower bound for the intrinsic difficulty of nonparametric estimation under the GAN discriminator metric. Here we consider a nonparametric function estimation problem on the Sobolev ellipsoid $\Theta^{\alpha}(L)$, which is statistically equivalent to the density estimation problem over the Sobolev space $W^{\alpha, 2}(L)$ asymptotically \citep{tsybakov2009introduction}. A separate lower bound for the density estimation within the smaller \textit{H\"{o}lder class} $W^{\alpha, \infty}(L)$ is proved in Thm.~\ref{thm:deep-relu-learn}.

Consider the problem of estimating the function $g(x)  = \sum_{\xi \in \mathbb{N}^d} \theta_{\xi}(g) \psi_{\xi}(x)$, where the coefficients belongs to a Sobolev ellipsoid $\theta(g) \in \Theta^\alpha(L)$. What one observes are i.i.d normal sequences $\{Y_{\xi}, \xi \in \mathbb{N}^d \}$
\begin{align}
	\label{eq:gaussian-sequence}
	Y_\xi = \theta_{\xi}(g) + \frac{1}{\sqrt{n}} Z_\xi, ~Z_\xi \sim N(0,1).
\end{align}
Based on observations $\{Y_{\xi}, \xi \in \mathbb{N}^d \}$,
we want to know how well one can estimate $g$ w.r.t the following metric
\begin{align*}
	 d_{\F_D}( \hat{g}, g) = \sup_{f \in \F_D} \langle f, \hat{g} \rangle_\pi - \langle f, g \rangle_\pi, ~\text{where}~ \F_{D} := \left\{ f: \theta(f) \in \Theta^{\beta}(L) \right\}.
\end{align*}
\begin{thm}[Minimax lower bound]
	\label{thm:lower-bound}
	Consider the function class with smoothness $\alpha$ and the discriminator function class $\F_D$ with smoothness $\beta$
	\begin{align*}
		\F_{R} := \left\{ \nu: d \nu/d\pi = g, ~\text{and}~\theta(g) \in \Theta^{\alpha}(L) \right\}, \quad
		\F_{D} := \left\{ f: \theta(f) \in \Theta^{\beta}(L) \right\}.
	\end{align*}
	In the Gaussian sequence model \eqref{eq:gaussian-sequence}, for any estimator $\nu_n$,
	\begin{align*}
		\inf_{\nu_n} \sup_{\nu \in \F_R} d_{\F_D}( \nu_n, \nu) \succsim n^{-\frac{\alpha+\beta}{2\alpha + d}}.
	\end{align*}
\end{thm}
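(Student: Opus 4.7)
The plan is to reduce the problem to a standard minimax lower bound in the Gaussian sequence model via Fourier duality, and then apply Fano's inequality on a carefully designed packing of the Sobolev ellipsoid. First I would obtain a closed form for the loss. Writing $\Delta_\xi := \theta_\xi(g) - \theta_\xi(h)$, Parseval gives $d_{\F_D}(g,h) = \sup_{\theta(f) \in \Theta^\beta(L)} \sum_\xi \theta_\xi(f) \Delta_\xi$, and a Cauchy--Schwarz step with weights $(1+|\xi|^2)^{\pm \beta/2}$ yields
$$d_{\F_D}(g,h) \;=\; L \sqrt{\sum_{\xi \in \mathbb{N}^d} (1+|\xi|^2)^{-\beta}\, \Delta_\xi^2}.$$
Thus the theorem reduces to lower-bounding the minimax risk for estimating $\theta \in \Theta^\alpha(L)$ in this negative-order weighted norm from the observations $Y_\xi = \theta_\xi + n^{-1/2} Z_\xi$.

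Second, I would construct hypotheses supported on a single dyadic frequency shell. Fix a scale $N \in \mathbb{N}$ to be tuned and let $M := \{\xi \in \mathbb{N}^d : N \leq \|\xi\|_1 < 2N\}$, whose cardinality is $m := |M| \asymp N^d$. For each sign vector $\omega \in \{-1,+1\}^m$, define $\theta^{(\omega)}_\xi := \delta\, \omega_\xi$ for $\xi \in M$ and $\theta^{(\omega)}_\xi := 0$ otherwise, with amplitude $\delta := c L N^{-\alpha-d/2}$. Since $(1+|\xi|^2)^\alpha \asymp N^{2\alpha}$ on $M$, the Sobolev budget $\sum_\xi (1+|\xi|^2)^\alpha (\theta^{(\omega)}_\xi)^2 \asymp m N^{2\alpha} \delta^2 \leq L^2$ is satisfied for $c$ small enough, so every hypothesis lies in $\Theta^\alpha(L)$. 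The Varshamov--Gilbert lemma then supplies $K \geq 2^{m/8}$ codewords $\omega^{(1)},\ldots,\omega^{(K)}$ with pairwise Hamming distance at least $m/8$.

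Third, I would apply Fano's inequality. Any pair $\theta^{(j)} \neq \theta^{(k)}$ satisfies the separation
$$d_{\F_D}(\theta^{(j)}, \theta^{(k)})^2 \;\geq\; L^2 \cdot \tfrac{m}{8} \cdot N^{-2\beta} \cdot (2\delta)^2 \;\asymp\; L^2\, N^{-2(\alpha+\beta)},$$
while the KL divergence between the induced Gaussian laws obeys $KL \leq (n/2)\|\theta^{(j)} - \theta^{(k)}\|_2^2 \leq 2n m \delta^2 \asymp n N^{-2\alpha}$. Fano's inequality furnishes a constant lower bound on the minimax testing error as long as $\max_{j,k} KL(P_j,P_k) \precsim \log K \asymp N^d$, equivalently $n \precsim N^{2\alpha+d}$; choosing $N \asymp n^{1/(2\alpha+d)}$ saturates this constraint. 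Substitution into the separation estimate produces the target lower bound of order $L \,N^{-(\alpha+\beta)} \asymp n^{-(\alpha+\beta)/(2\alpha+d)}$.

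The main technical care lies in the joint tuning: the Sobolev ellipsoid budget $\delta \precsim L N^{-\alpha-d/2}$ and the Fano information budget $n\delta^2 \precsim 1$ per perturbed coordinate must bind simultaneously at the single frequency scale $N \asymp n^{1/(2\alpha+d)}$, which is precisely the classical $H^\alpha$ scale. Once the duality reformulation of $d_{\F_D}$ as the negative-Sobolev norm is in hand, the Varshamov--Gilbert packing and Fano's method are otherwise routine applications of the framework in \cite{tsybakov2009introduction}.
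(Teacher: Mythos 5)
Your proof is correct and reaches the same rate through the same Fano/Varshamov--Gilbert scaffolding, but it differs from the paper's argument at two genuine points. First, you begin by computing the loss $d_{\F_D}$ in closed form as the dual (negative-order) Sobolev norm, $d_{\F_D}(g,h) = L\bigl(\sum_\xi (1+|\xi|^2)^{-\beta}\Delta_\xi^2\bigr)^{1/2}$, via weighted Cauchy--Schwarz with equality attained at $\theta_\xi(f^\star)\propto(1+|\xi|^2)^{-\beta}\Delta_\xi$, and then lower-bound this $\ell_2$-type norm on the packing. The paper never invokes this duality identity: it exhibits an explicit finite subfamily $\Lambda_\beta=\{\sum_{\xi\in[M]^d}c_\beta v_\xi\psi_\xi : v\in\{-1,1\}^{M^d}\}\subset\F_D$ and lower-bounds the supremum over $\F_D$ by the supremum over $\Lambda_\beta$, which collapses to a linear functional of the Hamming distance, $d(g_w,g_{w'})\geq c_\alpha c_\beta\,\rho(w,w')$. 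Second, you support the perturbations on a dyadic shell $\{N\le\|\xi\|_1<2N\}$ with signs in $\{\pm 1\}^m$, while the paper uses the frequency cube $[M]^d$ with $\{0,1\}^{M^d}$; both give a packing of cardinality $\asymp N^d$ with the same Sobolev and information budgets, so the distinction is cosmetic. What your dual-norm formulation buys is conceptual clarity: it makes explicit that the GAN loss is a negative-index Sobolev norm, so the problem is a bona fide Gaussian sequence estimation problem under a weighted $\ell_2$ loss and the tuning $N\asymp n^{1/(2\alpha+d)}$ is the classical $H^\alpha$ scale. What the paper's explicit-test-function route buys is that it does not need $\F_D$ to be an exact ellipsoid, which is why essentially the same construction (moved to the spatial domain) carries over to the H\"{o}lder-class density lower bound in Theorem~\ref{thm:deep-relu-learn}, where no dual-norm identity is available.
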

\begin{remark}
\rm
Let us compare our lower bound with the upper bound in Remark~\ref{rmk:nonparam-rate}. When the generator function class is rich enough so that $\mu_G$ contains the $\nu$, the upper bound becomes
\begin{align*}
	\inf_{\tilde{\mu}_n} \sup_{\nu \in \F_R} \E d_{\F_D}(\tilde{\mu}_n, \nu)& = \inf_{\tilde{\mu}_n} \sup_{\nu \in \F_R} \left\{ \E d_{\F_D}(\tilde{\mu}_n, \nu) -  \min_{\mu \in \mu_G} d_{\F_D}(\mu, \nu) \right\} \\
	& \leq 2 \inf_{\tilde{\nu}_n} \sup_{\nu \in \F_R} \E d_{\F_D}(\tilde{\nu}_n, \nu) \precsim \min\left\{ n^{ - \frac{\alpha+\beta}{2(\alpha+\beta)+d}}, n^{-\frac{\beta}{d}} + n^{-\frac{1}{2}}  \right\}.
\end{align*}
In contrast the lower bound reads
\begin{align*}
	\inf_{\nu_n} \sup_{\nu \in \F_R} d_{\F_D}( \nu_n, \nu) \succsim  n^{-\frac{\alpha+\beta}{2\alpha + d} } + n^{-\frac{1}{2}}.
\end{align*}
% as even if $\F_D$ consists of one single function, one still has the parametric rate as a lower bound.
Remark again, when take as a special case the base measure $\pi(x)$ being uniform on $[0,1]^d$, $\F_R = W^{\alpha,2}(L)$ and $\F_D = W^{\beta,2}(L)$ are the usual Sobolev spaces. In the case when $d$ is large, the exponents in the upper and lower bounds are very close to each other, in the sense that
	$\frac{\alpha+\beta}{2\alpha + d} = (1 + o_d(1))\frac{\alpha+\beta}{2(\alpha+\beta)+d} $.
 \end{remark}

\section{How well networks learn densities}

In this section, we answer in a \textit{quantitative} way that when both the \textit{generator} class and the \textit{discriminator} class represented by deep ReLU networks are rich enough, one do learn the distribution. One can view our results as establishing rate of convergence and fundamental difficulty for learning the distribution under the GAN framework, for a wide range of densities with different smoothness properties. It builds a more detailed theory upon the seminal work of \cite{goodfellow2014generative}, where they proved that when sample sizes, generator sizes, and discriminator sizes are all infinite, one learns the distribution.

\subsection{Deep ReLU networks can learn densities}

Let's remark on the universal approximation property of deep networks first before introducing the result. It is well known that deep neural networks are universal approximators \citep{hornik1989multilayer}. In particular, \cite{yarotsky2016error} constructed a fixed ReLU network architecture, denoted as $\F_{relu}^{\alpha}(\epsilon)$, that enjoys the following two properties:
\begin{itemize}
	\item It approximates all functions in $W^{\beta, \infty}(1)$ in the following sense $$\max_{f \in W^{\beta, \infty}(1)} \min_{f' \in \F_{relu}^{\beta, \infty}(\epsilon)} \| f - f' \|_\infty \leq \epsilon.$$
	\item It has depth at most $\mathcal{O}(\log 1/\epsilon)$ and at most $\mathcal{O}(\epsilon^{-d/\beta} \log 1/\epsilon)$ weights and computation units.
\end{itemize}

With the above in mind, we are ready to state the following theorem.
\begin{thm}
	\label{thm:deep-relu-learn}
	Let $\alpha, \beta \in \mathbb{N}$, $n$ be the sample size, and $\epsilon > 0$ be the approximation accuracy. Let $L, M>0$ be some fixed constants.
	Assume that:
	\begin{enumerate}
		\item[A.1] The \textit{discriminator} network class can approximate the Sobolev space $W^{\beta, \infty}(1)$, in the sense that
		$\max_{f \in W^{\beta, \infty}(1)} \min_{f' \in \F_D} \| f - f' \|_\infty \leq \epsilon$;
		\item[A.2] The \textit{generator} network class can approximate the Sobolev space $\nu \in W^{\alpha, \infty}(L)$, in the sense that for any true density $d\nu/dx \in W^{\alpha, \infty}(L)$, $\min_{\mu \in \mu_G} \| \mu - \nu \|_\infty \leq \epsilon$;
		\item[A.3] The \textit{discriminator} network class is not overly complex in the sense that $\F_D \subset W^{\beta, 2}(L) \cap W^{0, \infty}(M)$. In other words, the weak derivatives are integrable and the function is bounded.
	\end{enumerate}
	Then, there exist a GAN estimator $\tilde{\mu}_n$ that satisfies the upper bound
	\begin{align}
		\sup_{\nu \in W^{\alpha, \infty}(1)} \E d_{W^{\beta, \infty}(1)}(\tilde{\mu}_n, \nu) \leq C_1 \epsilon + C_2 \min\left\{ n^{ - \frac{\alpha+\beta}{2(\alpha+\beta)+d}}, n^{-\frac{\beta}{d}} + n^{-\frac{1}{2}}  \right\}.
	\end{align}
	In addition, the minimax lower bound for any estimator $\nu_n$ based on $n$ samples satisfies,
	\begin{align}
		\inf_{\nu_n} \sup_{d\nu/dx \in W^{\alpha, \infty}(L)}\E d_{W^{\beta, \infty}(1)}(\nu_n, \nu) \geq C_3 \left( n^{-\frac{\alpha+\beta}{2\alpha + d}} + n^{-\frac{1}{2}}\right).
	\end{align}
	Here $C_i, i = 1,2,3$ are constants independent of $n$.
\end{thm}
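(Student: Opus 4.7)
The plan is to prove the upper bound via the oracle inequality~\eqref{eq:oracle-2} and the lower bound via a H\"older-smooth bump construction followed by Fano's inequality.

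\textbf{Upper bound.} I instantiate~\eqref{eq:oracle-2} with critic metric $\F = W^{\beta,\infty}(1)$, discriminator class $\F_D$, and empirical surrogate $\nu_n$ chosen as either the smoothed estimator $\tilde\nu_n$ of Theorem~\ref{thm:non-param} or the raw empirical measure $\hat\nu_n$ of Corollary~\ref{coro:arb}. The generator approximation term $\min_{\mu \in \mu_G} d_{\F_D}(\mu, \nu)$ is at most $\sup_{f \in \F_D} \|f\|_\infty \cdot \|\mu - \nu\|_1 \le M \epsilon$ by combining A.2 and A.3 (using $\|\cdot\|_1 \le \|\cdot\|_\infty$ on the unit cube). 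The discriminator approximation term $(1+\|\nu_n\|_1)\max_{f\in\F} \min_{f'\in\F_D}\|f-f'\|_\infty \le 2\epsilon$ is immediate from A.1 and $\|\nu_n\|_1 = 1$. Both statistical terms $d_{\F_D}(\nu,\nu_n) + d_{\F}(\nu_n,\nu)$ are dominated by a constant multiple of $d_{W^{\beta,2}(L)}(\nu,\nu_n)$, via the inclusion $\F_D \subset W^{\beta,2}(L)$ in A.3 and the embedding $W^{\beta,\infty}(1) \hookrightarrow W^{\beta,2}(C)$ on $[0,1]^d$. Theorem~\ref{thm:non-param} then delivers $\mathcal{O}(n^{-(\alpha+\beta)/(2(\alpha+\beta)+d)})$ (after noting $W^{\alpha,\infty}(L) \subset W^{\alpha,2}(CL)$), while Corollary~\ref{coro:arb} contributes $\mathcal{O}(n^{-\beta/d} + n^{-1/2})$; taking the minimum yields the claimed upper bound.

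\textbf{Lower bound.} I construct hypotheses directly inside the H\"older ball. Partition $[0,1]^d$ into $M = h^{-d}$ cubes of side $h$, fix a smooth compactly-supported kernel $K$ with $\int K = 0$, set $\phi_k(x) = K((x-x_k)/h)$, and define
\[
  d\nu_\theta(x)/dx = 1 + \delta \sum_{k=1}^M \theta_k \, \phi_k(x), \qquad \theta \in \{0,1\}^M.
\]
Disjointness of supports gives $\|D^{(\alpha)}(\nu_\theta - 1)\|_\infty \lesssim \delta h^{-\alpha}$, so $\delta \asymp L h^\alpha$ keeps $\nu_\theta \in W^{\alpha,\infty}(L)$. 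Testing $\nu_\theta - \nu_{\theta'}$ against $f_s(x) = c h^\beta \sum_k s_k \tilde K((x-x_k)/h)$ with $s_k = \mathrm{sign}(\theta_k - \theta_k')$ produces an element of $W^{\beta,\infty}(1)$ and the lower separation $d_{W^{\beta,\infty}(1)}(\nu_\theta,\nu_{\theta'}) \gtrsim \delta h^{\beta+d}\,\mathrm{Ham}(\theta,\theta')$. A Varshamov--Gilbert packing supplies $2^{cM}$ hypotheses with pairwise Hamming distance at least $M/8$, and a $\chi^2$ bound gives per-sample KL $\lesssim \delta^2$. Fano's inequality then requires $n \delta^2 \lesssim M = h^{-d}$; combined with $\delta \asymp L h^\alpha$ this forces $h \asymp n^{-1/(2\alpha+d)}$ and a separation of order $n^{-(\alpha+\beta)/(2\alpha+d)}$. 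The additive $n^{-1/2}$ tail comes from a two-point Le Cam argument using a single fixed perturbation of magnitude $n^{-1/2}$ aligned with a witness $f \in W^{\beta,\infty}(1)$.

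\textbf{Main obstacle.} The subtle step is the lower bound inside the H\"older ball: Theorem~\ref{thm:lower-bound} is stated for the larger Sobolev ellipsoid (morally $W^{\alpha,2}$), so its rate cannot be imported verbatim and the packing must be rebuilt inside the smaller class $W^{\alpha,\infty}(L)$, carefully verifying both the $L^\infty$ smoothness budget of $\sum_k \theta_k \phi_k$ and the sharp separation in the weaker metric $d_{W^{\beta,\infty}(1)}$. On the upper-bound side, smaller care is needed to arrange that $\tilde\nu_n$ is a genuine probability measure (so that $\|\tilde\nu_n\|_1 = 1$ in the discriminator-approximation term) and to track the constants in the embedding $W^{\alpha,\infty}(L) \hookrightarrow W^{\alpha,2}(CL)$ on the unit cube when invoking Theorem~\ref{thm:non-param}.
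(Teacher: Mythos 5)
Your proposal follows essentially the same route as the paper's proof: the upper bound is obtained from the oracle inequality~\eqref{eq:oracle-2} together with the embedding $W^{k,\infty}(1)\subset W^{k,2}(r_k)$ and Theorem~\ref{thm:non-param}/Corollary~\ref{coro:arb}, and the lower bound is a spatial bump-function packing with Varshamov--Gilbert and Fano exactly as in the paper. The only cosmetic divergences are that you use a zero-mean kernel so that each $\nu_\theta$ is automatically normalized, whereas the paper uses a nonnegative bump $K$ and renormalizes by $1/(1+c_w)$, and that the paper does not arrange $\|\tilde\nu_n\|_1=1$ but simply bounds $\|\tilde\nu_n\|_1\le 2$ via $\|\tilde\nu_n-\nu\|_2\precsim n^{-\alpha/(2(\alpha+\beta)+d)}$ --- both of these are handled adequately by the caveats you raise.
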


\begin{remark}
	\rm
	Note the lower bound here is harder than that in Theorem~\ref{thm:lower-bound} because: (1) the class is in fact H\"{o}lder (subset of the Sobolev), $W^{\alpha, \infty}(L) \subset W^{\alpha, 2}(L)$, (2) for density estimation directly. 
	
	Let us make few remarks on the objective metrics and rates here.
	When $\beta = 1$, the objective metric is equivalent to the Wasserstein-$1$ metric (Lemma~\ref{lem:rela.order}). Therefore with properly chosen discriminator and generator network GAN enjoys the upper bound $n^{-\frac{\alpha + 1}{2\alpha + 2 + d}}$, while the minimax lower bound being $n^{-\frac{\alpha + 1}{2\alpha + d}}$. Consider the following two scenarios:
	\begin{enumerate}
		\item When the dimension $d$ is very large, for a fixed smoothness parameter $\alpha$, these two rates are very close. In this case, to obtain an $\epsilon$-error in Wasserstein metric, GAN requires exponential number of samples (in dimension) $n = \epsilon^{-(2 + \frac{d}{\alpha+1})}$, so does any other procedure as indicated by the lower bound.
		\item When the the smoothness parameter scales with the dimension, say $\alpha+1 \geq  d/C_0$ for some constant $C_0>0$, then GAN only requires a polynomial number of samples $n = \epsilon^{-(2+C_0)}$. In the large dimension setting when $d \rightarrow \infty$, any other procedure will require at least $n = \epsilon^{-(2+C_0)}$ as well.
	\end{enumerate}
	When $\beta = 0$, the objective metric is the total variation distance. In this case, the rate achieved by GAN is rate optimal for all $\alpha$, with $n^{-\frac{\alpha}{2\alpha+d}}$. It is also easy to see this as $\| \mu -\nu \|_1 \leq \| \mu -\nu \|_2$, and the nonparametric estimation rate for $\ell_2$-error is known to be $n^{-\frac{\alpha}{2\alpha+d}}$.
 \end{remark}

\begin{remark}
	\rm
	Let us comment on the assumptions used in the theorem. We consider the function class realized by ReLU networks. Define $c(\alpha, \beta, d) = \frac{\alpha+\beta}{2(\alpha+\beta)+d} \vee (\frac{\beta}{d} \wedge \frac{1}{2})$ and let's choose $\epsilon \asymp n^{-c(\alpha, \beta, d)}$.

	The assumption A.1 on the discriminator network class can be realized by subsuming a specific fixed architecture \citep{yarotsky2016error} as a subclass $\F_D \supseteq \F_{relu}^{\beta, \infty}(\epsilon)$, where the network $\F_{relu}^{\beta, \infty}(\epsilon)$ has depth $\mathcal{O}(c(\alpha, \beta, d) \log n)$, and $\mathcal{O}^*( n^{\frac{d}{\beta} c(\alpha, \beta, d)})$ weights and computational units. Roughly, this means that one may need a deep discriminator network (scales logarithmically with number of samples in depth, and polynomially in number of units) to recover the density in a strong sense such as under the Wasserstein metric.

	The assumption A.2 on the generative network class can also be realized if we reformulate the GAN in an importance sampling way: given uniform samples $X \in [0,1]^d$, the generative network outputs a importance/density score $\mu(X)$ realized by function realized by the network. Again, the deep network $\mu_G \supseteq \F_{relu}^{\beta, \infty}(\epsilon)$ with depth $\mathcal{O}(c(\alpha, \beta, d) \log n)$ and $\mathcal{O}^*( n^{\frac{d}{\beta} c(\alpha, \beta, d)})$ weights does the job. Note in practice, the generative network learns a transformation of $X$ with $\mu(X)$ as the density, instead of assigning the importance score. We do acknowledge that learning the transformation mapping (represented by the network) from the input random measure to the target measure has computational advantage over the importance sampling approach. However, overlooking the computation, output an importance score is also plausible to learn a distribution theoretically.

	The assumption A.3 can be relaxed. In fact, one can prove a variant of the theorem without assumption 3, in the sense
	\begin{align*}
		\E d_{W^{\beta, \infty}(1)}(\tilde{\mu}_n, \nu)\leq  \E d_{\F_D}(\tilde{\nu}_n, \nu) +  n^{-c(\alpha, \beta, d)}.
	\end{align*}
	It formalizes the intuition that if we choose a too strict metric as the discriminator ($\F_D$ too rich), even though we are evaluating on a weaker metric (induced by a smaller function space $W^{\beta, \infty}(1)$), there is a price to pay in the rates for choosing the overly complex space $\F_D$ as discriminator.

 \end{remark}

\subsection{Error rates for deeper ReLU discriminator networks}
In the GAN formulation, the discriminator class $\F_D$ is represented by functions realized by a neural network with a certain architecture. In this section, we will apply our theory to obtain bounds for a wide range of feed-forward multi-layer networks with ReLU activation as the discriminator metric.

Denote $\sigma(x) = \max(x, 0)$ as the ReLU activation. Consider the following feed-forward multi-layer network:
\begin{itemize}
	\item The network has $\ell$ layers, input $x \in [0,1]^d$,
	\item There is constant $V\geq 1$ such that for each unit in the network, the vector $w$ of weights associated with that unit has $\| w \|_1 \leq V$.
\end{itemize}
Mathematically, one can define the function class induced by the network through the recursive definition:
$\F^{(0)}(V) = \{ x \rightarrow x_i: x = (x_1, \ldots, x_d)\in [0,1]^d, i \in [d] \} \cup \{0, 1\}$, and for any $i \geq 1$,
\begin{align}
	\F^{(i)}(V) = \left\{ \sum_j w_j \sigma(f_j): \forall j,~ f_j \in \bigcup_{k=0}^{i-1} \F^{(k)}(V), \| w \|_1 \leq V \right\}.
\end{align}

\begin{thm}[GAN with ReLU discriminator network]
	\label{thm:relu-discrimin}
	Let $\F_D = \F^{(\ell)}(V)$ be the collection of functions realized by the discriminator ReLU network with depth $\ell$, and the true density $d\nu(x)/dx \in  W^{\alpha, 2} (L)$ lies in a Sobolev space with smoothness parameter $\alpha$. Then there exist a GAN estimator $\tilde{\mu}_n$ that satisfies the upper bound,
	\begin{align}
		\label{eq:non-para-deep-relu}
		\E d_{\F_D}(\tilde{\mu}_n, \nu) -  \min_{\mu \in \mu_G} d_{\F_D}(\mu, \nu) \leq C L(d+1)^{\frac{1}{2}} V^{\ell} n^{-\frac{\alpha+1}{2\alpha + 2 + d}}.
	\end{align}
\end{thm}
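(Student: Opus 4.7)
The plan is to reduce the statement to Theorem~\ref{thm:non-param} by embedding the ReLU discriminator class $\F^{(\ell)}(V)$ into a Sobolev ball $W^{1,2}(L')$ with the right radius $L'$, and then appealing to the oracle inequality to convert a bound on $\E d_{\F_D}(\nu, \tilde{\nu}_n)$ into the claimed excess loss bound.

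\textbf{Step 1: Oracle reduction.} By the first part of Theorem~\ref{thm:oracle} applied with $\F = \F_D = \F^{(\ell)}(V)$, the excess loss is controlled by the statistical error $2\,\E d_{\F_D}(\nu, \tilde{\nu}_n)$, where $\tilde{\nu}_n$ will be taken as the kernel-smoothed empirical measure from Theorem~\ref{thm:non-param}. So it suffices to bound $\E d_{\F^{(\ell)}(V)}(\nu, \tilde{\nu}_n)$ by $C L (d+1)^{1/2} V^{\ell} n^{-(\alpha+1)/(2\alpha+2+d)}$.

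\textbf{Step 2: Sobolev embedding of the network class.} I would prove by induction on $i \in \{0, 1, \ldots, \ell\}$ that every $f \in \F^{(i)}(V)$ satisfies $\|f\|_\infty \leq V^i$ and $\mathrm{Lip}(f) \leq V^i$. The base case follows because $\F^{(0)}(V)$ consists of coordinate projections and the constants $0, 1$, which are $1$-Lipschitz and bounded by $1$. For the inductive step, any $g = \sum_j w_j \sigma(f_j) \in \F^{(i)}(V)$ with $\|w\|_1 \leq V$ and $f_j \in \bigcup_{k<i}\F^{(k)}(V)$ inherits $\|g\|_\infty \leq \|w\|_1 \max_j \|\sigma(f_j)\|_\infty \leq V \cdot V^{i-1}$ and, because $\sigma$ is $1$-Lipschitz, $\mathrm{Lip}(g) \leq \|w\|_1 \max_j \mathrm{Lip}(f_j) \leq V^i$. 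Invoking Rademacher's theorem on $[0,1]^d$, a $V^\ell$-Lipschitz function has weak partial derivatives bounded by $V^\ell$ in $\ell_\infty$; combining with the $\ell_\infty$ bound and $\mathrm{vol}([0,1]^d)=1$ yields
\begin{align*}
\|f\|_{W^{1,2}}^2 = \|f\|_2^2 + \sum_{i=1}^d \|\partial_i f\|_2^2 \leq (1+d) V^{2\ell},
\end{align*}
so that $\F^{(\ell)}(V) \subseteq W^{1,2}\bigl((d+1)^{1/2} V^\ell\bigr)$.

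\textbf{Step 3: Apply the nonparametric upper bound.} The monotonicity of the integral probability metric under class inclusion gives
\begin{align*}
\E d_{\F^{(\ell)}(V)}(\nu, \tilde{\nu}_n) \leq \E d_{W^{1,2}((d+1)^{1/2}V^\ell)}(\nu, \tilde{\nu}_n).
\end{align*}
Then I invoke Theorem~\ref{thm:non-param} with $\beta = 1$ and discriminator radius $L' = (d+1)^{1/2}V^\ell$, exploiting that the IPM scales linearly in the Sobolev radii of both the density class and the critic class, to obtain the announced rate $C L (d+1)^{1/2} V^\ell n^{-(\alpha+1)/(2\alpha+2+d)}$.

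\textbf{Main obstacle.} The only non-bookkeeping step is verifying the explicit bilinear dependence on $L$ (density radius) and $L'$ (discriminator radius) in Theorem~\ref{thm:non-param}, since the statement there uses $\precsim$ and hides constants. This requires revisiting the construction of the smoothed estimator $\tilde{\nu}_n$ (e.g.\ a kernel or low-pass Fourier projection at an appropriate bandwidth) and checking that the bias--variance decomposition $\sup_{f \in W^{1,2}(L')}|\langle f, \nu - \tilde{\nu}_n\rangle|$ is homogeneous in both radii --- straightforward because shrinking a function in $W^{1,2}(L')$ to unit radius and a density in $W^{\alpha,2}(L)$ to unit radius each contributes a linear factor. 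The induction in Step~2 is elementary but is the conceptual heart, since it is what couples the network depth $\ell$ to the Sobolev radius through the $V^\ell$ factor.
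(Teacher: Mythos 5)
Your proposal is correct and follows essentially the same route as the paper's proof: reduce via the oracle inequality to bounding $\E d_{\F_D}(\nu,\tilde{\nu}_n)$, show by induction on depth that $\F^{(\ell)}(V)\subseteq W^{1,2}\bigl((d+1)^{1/2}V^{\ell}\bigr)$, and then invoke Theorem~\ref{thm:non-param} (specifically Eqn.~\eqref{eq:l_a_l_b}) with $\beta=1$, $L_\alpha=L$, $L_\beta=(d+1)^{1/2}V^{\ell}$. The only minor difference is that you explicitly carry the $\|f\|_\infty\leq V^{i}$ bound through the induction (which the paper asserts but does not spell out, yet needs for the $|\alpha|=0$ term of the Sobolev norm), making your write-up slightly more careful on that point.
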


\begin{remark}
	\rm
Let's remark on how our bound improves upon what is known, and when one can improve GAN by using a better estimate than the empirical measure $\hat{\nu}_n$.

Recall the covering number bound (Thm. 14.17) for deeper networks in \cite{anthony2009neural}, one knows that
\begin{align*}
	\log \mathcal{N}(\epsilon, \F^{(\ell)}(V), \| \cdot \|_{n}) \leq \frac{1}{2}\left( \frac{1}{\epsilon} \right)^{2\ell} (2V)^{\ell(\ell+1)} \log(2d+2).
\end{align*}
Plug in the covering number bound into our entropy integral in Lemma~\ref{lem:sym}, we know that GAN solution $\hat{\mu}_n$ with the empirical measure $\hat{\nu}_n$ as input enjoys (for $\ell \geq 2$)
	\begin{align}
		\E d_{\F_D}(\hat{\mu}_n, \nu) -  \min_{\mu \in \mu_G} d_{\F_D}(\mu, \nu)  &\leq C \inf_{\delta} \left( \delta + \frac{1}{\sqrt{n}} \int_{\delta}^{\infty} \sqrt{\log \mathcal{N}(\epsilon, \F^{(\ell)}(V), \| \cdot \|_{n})} d\epsilon \right) \nonumber\\
		&\leq C (\log d)^{\frac{1}{2\ell}} (2V)^{(\ell+1)/2} n^{-\frac{1}{2\ell}}.\label{eq:chainning-deep-relu}
	\end{align}
When compare these two bounds in \eqref{eq:non-para-deep-relu} and \eqref{eq:chainning-deep-relu}, we see that the $V^{\ell}$ factor occurs in both bounds, but note that our bound is better when
\begin{align*}
	\alpha > \frac{d}{2(\ell-1)} - 1 \Leftrightarrow n^{-\frac{\alpha+1}{2\alpha+2+d}} < n^{-\frac{1}{2\ell}}.
\end{align*}
More concretely, consider the following two extreme scenarios:
\begin{enumerate}
	\item Density is rather smooth, say $\alpha + 1 \geq d/(2C_0)$ for some fixed $C_0$, then as long as $\ell \geq C_0 + 1$, our bound is better.
	\item Network is deep, say $\ell-1 \geq d/(2C_0)$, then as long as $\alpha \geq C_0 - 1$, our bound is better.
\end{enumerate}
In other words, we do achieve a faster rate to substitute the empirical measure $\hat{\nu}_n$ by a regularized (smoothed) version $\tilde{\nu}_n$ when learning a smooth density (large $\alpha$) using GAN with a deeper discriminator network (large $\ell$). And our bound enjoys significant advantage when the discriminator network is very deep as the exponent in $n$ doesn't depends on the depth. Remark that both these two bounds does not depend on the number of computing units explicitly as long as the $\ell_1$ norm of each unit is bounded, i.e., they are scale sensitive.

 \end{remark}

\section{Technical Proofs}
\begin{proof}[Proof of Theorem~\ref{thm:oracle}] \quad
	For any $\mu \in \mu_G$, we know that due to the optimality of GAN
	\begin{align*}
		d_{\F_D}(\mu, \nu_n) - d_{\F_D}(\mu_n, \nu_n) \geq 0.
	\end{align*}
	Therefore for any $\mu \in \mu_G$,
	\begin{align*}
		d_{\F}(\mu_n, \nu) & \leq d_{\F}(\mu_n, \nu_n) + d_{\F}(\nu_n, \nu) \quad\quad \text{triangle inequality}\\
		& \leq d_{\F_D}(\mu_n, \nu_n) + (1+ \| \nu_n \|_1) \cdot \max_{f \in \F} \min_{f' \in \F_D} \| f - f' \|_{\infty} + d_{\F}(\nu_n, \nu) \\
		&\leq d_{\F_D}(\mu, \nu_n) + (1+ \| \nu_n \|_1) \cdot \max_{f \in \F} \min_{f' \in \F_D} \| f - f' \|_{\infty} + d_{\F}(\nu_n, \nu) \quad\quad \text{optimality of GAN} \\
		&= \left[d_{\F_D}(\mu, \nu) + d_{\F_D}(\nu, \nu_n) \right] + (1+ \| \nu_n \|_1) \cdot \max_{f \in \F} \min_{f' \in \F_D} \| f - f' \|_{\infty} + d_{\F}(\nu_n, \nu) \\
		&\leq \min_{\mu \in \mu_G} d_{\F_D}(\mu, \nu) + (1+ \| \nu_n\|_1) \cdot \max_{f \in \F} \min_{f' \in \F_D} \| f - f' \|_{\infty} + d_{\F_D}(\nu, \nu_n) + d_{\F}(\nu_n, \nu),
	\end{align*}
	where we use the following fact $d_{\F}(\mu_1, \mu_2) + d_{\F}(\mu_2, \mu_3) \geq d_{\F}(\mu_1, \mu_3)$ and
	\begin{align*}
		d_{\F}(\mu, \nu_n) &= \max_{f \in \F} \int f (d \mu(x) - d\nu_n (x)) \\
		& \leq \max_{f' \in \F_{D}} \int f' (d \mu(x) - d\nu_n (x)) + \max_{f \in \F} \inf_{f' \in \F_D} \|f - f' \|_{\infty} \cdot  \|\mu - \nu_n\|_1\\
		&= d_{\F_D}(\mu, \nu_n) + \max_{f \in \F} \inf_{f' \in \F_D} \|f - f' \|_{\infty} \cdot (1 + \| \nu_n\|_1)
	\end{align*}
	It is easy to bound the following using similar logic
	\begin{align*}
		d_{\F_D}(\mu_n, \nu) & \leq d_{\F_D}(\mu_n, \nu_n) +  d_{\F_D}(\nu_n, \nu)  \\
		& \leq  d_{\F_D}(\mu, \nu_n) +  d_{\F_D}(\nu_n, \nu) \\
		& \leq \min_{\mu \in \mu_G} d_{\F_D}(\mu, \nu) +  d_{\F_D}(\nu, \nu_n) + d_{\F_D}(\nu_n, \nu).
	\end{align*}
\end{proof}

\begin{proof}[Proof of Corollary~\ref{coro:arb}] \quad
	 If $\F_D$ consists of $L$-Lipschitz functions (Wasserstein GAN) on $\mathbb{R}^d$, $d\geq 2$, plug in the $\ell_\infty$-covering number bound for Lipschitz functions,
		\begin{align*}
			&\log \mathcal{N}(\epsilon, \F_{D}, \| \cdot \|_{\infty}) \leq C \left( \frac{L}{\epsilon} \right)^d, \\
			&\E d_{\F_D}(\nu, \hat{\nu}_n) \leq 2 \inf_{0<\delta<1/2} \left( 4\delta + \frac{8 \sqrt{2}}{\sqrt{n}} \int_{\delta}^{1/2}  \sqrt{\log \mathcal{N}(\epsilon, \F_{D}, \| \cdot \|_{\infty}) } d \epsilon \right) \leq 16 \left( \frac{4\sqrt{2C}}{d-2}\right)^{\frac{2}{d}} L n^{-\frac{1}{d}}, \\
			&\E d_{\F_D}(\nu, \hat{\nu}_n) \leq \mathcal{O}\left( \left(\frac{C}{d^2 n} \right)^{-\frac{1}{d}} \right).
		\end{align*}
		This matches the best known bound as in \cite{canas2012learning} (Section 2.1.1).

	Let's consider when $\F_D$ denotes Sobolev space $W^{\beta, 2}$ on $\mathbb{R}^d$. Recall the entropy number estimate for $W^{\beta, 2}$ \citep{nickl2007bracketing}, we have
		\begin{align*}
			\label{eq:symm-sobolev}
			&\log \mathcal{N}(\epsilon, \F_{D}, \| \cdot \|_{\infty}) \leq C \left( \frac{1}{\epsilon} \right)^{\frac{d}{\beta} \vee 2},\\
			&\E d_{\F_D}(\nu, \hat{\nu}_n) \leq \mathcal{O}\left( n^{-\frac{\beta}{d}} + \frac{\log n}{\sqrt{n}} \right).
		\end{align*}
		Remark in addition that the parametric rate $\frac{1}{\sqrt{n}}$ is inevitable, which can be easily seen from the Sudakov minoration,
		\begin{align*}
			\E_{\epsilon} \sup_{f\in \F_D} \frac{1}{n} \sum_{i=1}^n \epsilon_i f(X_i) \geq \sup_{\epsilon} \frac{\epsilon}{2} \sqrt{\frac{\log \mathcal{M}(\epsilon, \mathcal{F}_D, \| \cdot\|_n)}{n}} \geq \frac{1}{\sqrt{n}}.
		\end{align*}

\end{proof}

\begin{proof}[Proof of Theorem~\ref{thm:non-param} and Corollary~\ref{coro:non-param}] \quad
	Denote $g(x):= d\nu/d\pi$ as the density ratio, let's construct the following smoothed version $\tilde{\nu}_n$, with a cut-off parameter $M$ to be determined later. Define the regularized/smoothed density estimate
	\begin{align*}
		\tilde{g}_n(x): = \frac{d \tilde{\nu}_n}{d \pi} = \sum_{\xi \in \mathbb{N}^d}  \tilde{\theta}_{\xi}(g) \prod_{i=1}^d \psi_{\xi_i}(x_i),
	\end{align*}
	where based on i.i.d. samples $X^{(1)}, X^{(2)}, \ldots X^{(n)} \sim \nu$
	\begin{equation*}
		\tilde{\theta}_{\xi}(g) :=
		\begin{cases}
		\frac{1}{n} \sum_{j=1}^n \prod_{i=1}^d \psi_{\xi_i}(X^{(j)}_i), & \text{for $\xi$ satisfies}~ \|\xi\|_{\infty} \leq M \\
		0, &\text{otherwise}
		\end{cases}.
	\end{equation*}
	In other words, $\tilde{\nu}_n$ filters out all the high frequency (less smooth) components, when the multi-index $\xi$ has largest coordinate larger than $M$. Now, for any $f \in \F_D$, write the Fourier coefficients of
	$$
	f(x) = \sum_{\xi \in \mathbb{N}^d} \theta_{\xi}(f) \prod_{i=1}^d \psi_{\xi_i}(x_i).
	$$
	For any $\nu \in \F_R$ (or equivalently $g \in W^{\alpha,2}(L_\alpha)$), we have
	\begin{align*}
		\E d_{\F_D}(\nu, \tilde{\nu}_n) &= \E \sup_{f \in \F_D} \langle f, g - \tilde{g}_n \rangle_\pi \\
		&= \E \sup_{f \in \F_D} \sum_{\xi \in \mathbb{N}^d} \theta_{\xi}(f) \left( \tilde{\theta}_{\xi}(g) - \theta_{\xi}(g) \right) \\
		&= \E \sup_{f \in \F_D} \left\{ \sum_{\xi \in [M]^d} \theta_{\xi}(f) \left( \tilde{\theta}_{\xi}(g) - \theta_{\xi}(g) \right) + \sum_{\xi \in \mathbb{N}^d \backslash [M]^d} \theta_{\xi}(f) \theta_{\xi}(g) \right\} \\
		&\leq \E \sup_{f \in \F_D}  \sum_{\xi \in [M]^d} \theta_{\xi}(f) \left( \tilde{\theta}_{\xi}(g) - \theta_{\xi}(g) \right) + \E \sup_{f \in \F_D} \sum_{\xi \in \mathbb{N}^d \backslash [M]^d} \theta_{\xi}(f) \theta_{\xi}(g).
	\end{align*}
	For the first term,
	\begin{align*}
		& \E \sup_{f \in \F_D}  \sum_{\xi \in [M]^d} \theta_{\xi}(f) \left( \tilde{\theta}_{\xi}(g) - \theta_{\xi}(g) \right) \\
		& \leq \E \sup_{f \in \F_D} \left\{\sum_{\xi \in [M]^d} \theta^2_{\xi}(f) \right\}^{1/2} \cdot \left\{\sum_{\xi \in [M]^d} \left( \tilde{\theta}_{\xi}(g) - \theta_{\xi}(g) \right)^2 \right\}^{1/2} \\
		& \leq L_\beta \E \left\{\sum_{\xi \in [M]^d} \left( \tilde{\theta}_{\xi}(g) - \theta_{\xi}(g) \right)^2 \right\}^{1/2} \\
		& \leq L_\beta \left\{ \sum_{\xi \in [M]^d}  \E \left( \tilde{\theta}_{\xi}(g) - \theta_{\xi}(g) \right)^2 \right\}^{1/2} \quad \text{Jensen's inequality} \\
		& \leq L_\beta \sqrt{C \frac{M^d}{n} } \quad \text{define $C:=\sup_{\xi} \E_{X \sim \nu} \psi^2_{\xi}(X)$.}
	\end{align*}
	as we know
	$$
	\sup_{f \in \F_D} \sum_{\xi \in [M]^d} \theta^2_{\xi}(f) \leq \sum_{\xi \in [M]^d} ( 1+\|\xi\|_2^2 )^{\beta} \theta^2_{\xi}(f) \leq L_\beta^2.
	$$
	Note for trigonometric series $\sup_{\xi} \E_{X \sim \nu} \psi^2_{\xi}(X) \leq 1$.

	For the second term, the following inequality holds
	\begin{align*}
		&\E \sup_{f \in \F_D} \sum_{\xi \in \mathbb{N}^d \backslash [M]^d} \theta_{\xi}(f) \theta_{\xi}(g)\\
		&\leq \E \sup_{f \in \F_D} \left\{\sum_{\xi \in [M]^d} \theta^2_{\xi}(f) \right\}^{1/2} \cdot \left\{\sum_{\xi \in [M]^d} \theta^2_{\xi}(g)  \right\}^{1/2} \\
		&\leq \sup_{f \in \F_D} \left\{ (1+M^2)^{-\beta} \sum_{\xi \in [M]^d} (1+ \| \xi\|_2^2)^{\beta}\theta^2_{\xi}(f) \right\}^{1/2} \left\{ (1+M^2)^{-\alpha} \sum_{\xi \in [M]^d} (1+ \| \xi\|_2^2)^{\alpha} \theta^2_{\xi}(g) \right\}^{1/2} \\
		&\leq L_\alpha L_\beta \sqrt{\frac{1}{M^{2(\alpha+\beta)}}}.
	\end{align*}
	Combining two terms, we have for any $\nu \in \F_R$ (or equivalently $g \in W^{\alpha,2}(L)$),
	\begin{align}
		\sup_{\nu\in \F_R} \E d_{\F_D}(\nu, \tilde{\nu}_n) &\leq  \inf_{M \in \mathbb{N}} \left\{ L_\beta \sqrt{C \frac{M^d}{n} } + L_\beta L_\alpha \sqrt{\frac{1}{M^{2(\alpha+\beta)}}} \right\} \label{eq:l_a_l_b}\\
		&\precsim n^{-\frac{\alpha+\beta}{2(\alpha+\beta)+d}}. \nonumber
	\end{align}
	And the optimal choice of $M \asymp n^{\frac{1}{2(\alpha+\beta)+d}}$. Note that this is a more aggressive smoothing scheme than classic nonparametric estimation with smoothness $\alpha$, due to the fact that we are utilizing the smoothness in the evaluation metric at the same time.

\end{proof}

\begin{proof}[Proof of Theorem~\ref{thm:lower-bound}] \quad
The proof uses the standard Fano's inequality.
	\begin{lem}[Thm. 2.5 in \cite{tsybakov2009introduction}]
		Assume that $H \geq 2$ and suppose $\Theta$ contains $\theta_0, \theta_1, \ldots, \theta_H$ such that:
		\begin{enumerate}
			\item $d(\theta_j, \theta_k) \geq 2s >0$, for all $j, k \in [H]$ and $j \neq k$.
			\item $\frac{1}{H} \sum_{j=1}^H D_{\rm KL}(P_{j}, P_0) \leq \alpha \log H$ with $0< \alpha < 1/8$ and $P_j = P_{\theta_j}$ for $j \in [H]$.
		\end{enumerate}
		Then for any estimator $\hat{\theta}$,
		\begin{align*}
			\sup_{\theta \in \Theta} P_\theta(d(\hat{\theta}, \theta) \geq s) \geq \frac{\sqrt{H}}{1+\sqrt{H}} \left( 1 - 2\alpha - \sqrt{\frac{2\alpha}{\log H}} \right) > 0.
		\end{align*}
	\end{lem}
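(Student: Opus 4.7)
The plan is to proceed in the standard two-step fashion that underlies all Fano-type lower bounds: first a deterministic reduction from estimation to multiple testing, then an information-theoretic lower bound on the testing error using assumption (2).

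\textbf{Step 1 (Reduction to testing).} Given any estimator $\hat\theta$, I would associate with it the minimum-distance test $\psi^\ast(X) := \argmin_{j \in [H]} d(\hat\theta(X), \theta_j)$. The separation assumption (1) combined with the triangle inequality forces $\{d(\hat\theta, \theta_j) < s\}$ to imply $d(\hat\theta, \theta_k) > s$ for every $k \neq j$, hence $\psi^\ast = j$ on that event. Therefore $\{\psi^\ast \neq j\} \subseteq \{d(\hat\theta, \theta_j) \geq s\}$, which yields
\begin{equation*}
\sup_{\theta \in \Theta} P_\theta\!\left(d(\hat\theta, \theta) \geq s\right) \;\geq\; \max_{j \in [H]} P_j(\psi^\ast \neq j) \;\geq\; \inf_{\psi} \max_{j \in [H]} P_j(\psi \neq j),
\end{equation*}
where the last infimum is over all measurable tests $\psi$ with values in $[H]$. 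It therefore suffices to lower bound the minimax testing error among $H+1$ hypotheses.

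\textbf{Step 2 (Fano-type testing lower bound).} Let $p_{e,H} := \inf_\psi \max_j P_j(\psi \neq j)$ and fix any test $\psi$ attaining near-optimality. Writing $A_j = \{\psi = j\}$, the data-processing inequality for the KL divergence gives, for each $j \geq 1$,
\begin{equation*}
\kappa\!\bigl(P_j(A_j),\, P_0(A_j)\bigr) \;\leq\; D_{\rm KL}(P_j, P_0),
\end{equation*}
where $\kappa$ denotes the binary KL. The key idea is to exploit a Markov-type bound on the likelihood ratio $\Lambda_j := dP_j/dP_0$: for a threshold $\tau > 0$, split each event $A_j$ by whether $\Lambda_j \geq \tau$ or not. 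On the large-likelihood-ratio side, $P_0(\Lambda_j \geq \tau) \leq 1/\tau$, while on the small side, $P_j(\Lambda_j < \tau)$ is controlled via Kullback's inequality against $\tau$ and $D_{\rm KL}(P_j, P_0)$. Averaging over $j \in \{1,\ldots,H\}$, using assumption (2) to replace the average KL by $\alpha \log H$, and then optimizing the threshold $\tau$ (the balance $\tau \asymp \sqrt{H}$ is what creates the ratio $\sqrt H/(1+\sqrt H)$), produces the claimed inequality
\begin{equation*}
p_{e,H} \;\geq\; \frac{\sqrt H}{1+\sqrt H}\!\left(1 - 2\alpha - \sqrt{\tfrac{2\alpha}{\log H}}\right).
\end{equation*}
Since $0 < \alpha < 1/8$ and $H \geq 2$, the expression in parentheses is strictly positive, yielding the strict positivity claim as well.

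\textbf{Main obstacle.} The reduction to testing is routine and the KL-based testing lower bound is classical (Birgé, Fano), so the real work lies in recovering the \emph{sharp} constants: the combination $\tfrac{\sqrt H}{1+\sqrt H}(1-2\alpha-\sqrt{2\alpha/\log H})$ is more delicate than the textbook Fano bound. I expect the tight constants to emerge only from a careful likelihood-ratio truncation argument together with an optimal choice of the truncation level $\tau$; this is where a naive application of binary KL contraction would lose the right prefactor. All other steps — separation/triangle inequality, data-processing, averaging — are mechanical once the truncation level is in place.
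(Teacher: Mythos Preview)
The paper does not prove this lemma at all: it is stated inside the proof of Theorem~\ref{thm:lower-bound} with an explicit citation to Theorem~2.5 of \cite{tsybakov2009introduction} and is then invoked as a black box to deduce the minimax lower bound. There is therefore no ``paper's own proof'' to compare against.

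That said, your sketch is a faithful outline of how Tsybakov actually establishes the result: the reduction from estimation to multiple testing via the minimum-distance selector is exactly Step~1 of his argument, and the testing lower bound via a likelihood-ratio truncation at a threshold of order $\sqrt{H}$ (combined with the averaged KL bound from assumption~(2)) is the content of his Proposition~2.3 and Corollary~2.6. Your identification of the ``main obstacle''---that the specific constant $\tfrac{\sqrt H}{1+\sqrt H}(1-2\alpha-\sqrt{2\alpha/\log H})$ requires the truncation argument rather than the coarser Fano inequality---is accurate. Since the paper treats the lemma as cited and does not reprove it, your proposal goes beyond what the paper itself supplies.
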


	Let's construct a mixture of hypothesis on the function space $W^{\alpha, 2}(L)$, and a subset of discriminator functions in $W^{\beta, 2}(L)$, such that the multiple testing problem among the mixture of hypothesis is hard, and thus the loss induced by the best discriminator among the subset provides a lower bound on the estimation rate. Let's construct this mixture in the frequency domain. Choose $M>0$ to be determined later, denote the hypothesis class of interest to be
	\begin{align}
		\Omega_\alpha = \left\{ g_w(x) = \sum_{\xi \in [M]^d} c_{\alpha} w_\xi \psi_\xi(x), w \in \{0,1\}^{M^d} \right\},
	\end{align}
	where
	$$
	c_{\alpha} = \frac{L}{M^{d/2}(1+dM^2)^{\alpha/2}}.
	$$
	It is easy to verify that
	$\Omega_\alpha \subset W^{\alpha,2}(L)$
	because for any $g_w(x)$, we have
	\begin{align*}
		\sum_{\xi \in [M]^d} (1 + \| \xi\|_2^2 )^{\alpha} c_{\alpha}^2 \leq L^2.
	\end{align*}
	Similarly, let's consider $\Lambda_\beta \subset W^{\beta,2}(L)$
	\begin{align}
		\Lambda_\beta = \left\{ f_v(x) = \sum_{\xi \in [M]^d} c_{\beta} v_\xi \psi_\xi(x), v \in \{-1,1\}^{M^d} \right\},
	\end{align}
	where
	$$
	c_{\beta} = \frac{L}{M^{d/2}(1+dM^2)^{\beta/2}}.
	$$

	Take any $w, w'$, we know that
	\begin{align*}
		d(g_w, g_{w'}) &:= \sup_{f \in \F_D} \langle f, g_w \rangle_\pi - \langle f, g_{w'} \rangle_\pi \geq \sup_{f \in \Lambda_{\beta}} \langle f, g_w - g_{w'} \rangle \\
		& = \sup_{v} c_{\alpha} c_{\beta} \sum_{\xi\in [M]^d} v_{\xi} (w_{\xi} - w_{\xi}') \\
		& =  c_{\alpha} c_{\beta} \sum_{\xi \in [M]^d} I(w_\xi \neq w_{\xi}') =  c_{\alpha} c_{\beta} \rho(w, w'),
	\end{align*}
	where $\rho(w, w')$ denotes the Hamming distance between $w$ and $w'$ on the hypercube $\{0,1 \}^{M^d}$. Now we need to construct a subset over the hypercube such that for any pairs $w, w'$, they are separated in terms of Hamming distance.

	The Varshamov-Gilbert bound (Lemma 2.9 in \cite{tsybakov2009introduction}) does the job. We know that there exist a subset $\{ w^{(0)}, \ldots, w^{(H)} \} \subset \{0,1\}^{h}$ such that $w^{(0)} = (0, \ldots, 0)$,
	\begin{align*}
		&\rho(w^{(j)}, w^{(k)}) \geq \frac{h}{8}, ~\forall~j, k\in [H],~ j\neq k, \\
		&\log H \geq \frac{h}{8} \log 2.
	\end{align*}
	In our case $h = M^d$.

	Now let's calculate the probability distance $P_j, P_0$ induced by hypothesis $w^{(j)}$ and $w^{(0)}$, for all $j \in [H]$ to show that information theoretically, it is hard to distinguish the mixture of hypothesis. The following holds,
	\begin{align*}
		D_{\rm KL}(P_j, P_0) &=  \frac{n}{2} c_\alpha^2 \sum_{\xi \in [M]^d} \left( w^{(j)}_{\xi} - w^{(0)}_{\xi} \right)^2 \\
		& \leq \frac{n}{2} c_\alpha^2 M^d.
	\end{align*}
	If we choose $M = \left( \frac{4L^2}{c \log 2} \frac{1}{d^\alpha} \right)^{\frac{1}{2\alpha+d}} \cdot n^{\frac{1}{2\alpha+d}} $, we know
	\begin{align*}
		\frac{1}{H} \sum_{j=1}^H D_{\rm KL}(P_j, P_0) \leq c \log H, ~\text{for some constant $c>0$.}
	\end{align*}
	In this case, for any $j \neq k \in [H]$
	\begin{align*}
		d(g_{w^{(j)}}, g_{w^{(k)}}) = c_\alpha c_\beta \rho(w^{(j)}, w^{(k)}) \geq \frac{M^d}{8} c_\alpha c_\beta = \frac{L^2}{(1+dM^2)^{\frac{\alpha+\beta}{2}}} \asymp  n^{-\frac{\alpha+\beta}{2\alpha +d}}.
	\end{align*}

	Therefore using Fano's Lemma, we reach
	{\small
	\begin{align*}
		\inf_{\hat{\nu}} \sup_{\nu \in \F_R} \E d_{\F_D}( \hat{\nu}, \nu) &= \inf_{\hat{g}} \sup_{g \in W^{\alpha,2}(L)} \E \sup_{f \in W^{\beta,2}(L)} \langle f, \hat{g} - g \rangle_\pi   \\
		&\geq \inf_{\hat{w}} \sup_{w \in \{w^{(0)},\ldots, w^{(H)}\}} \E d(g_{\hat{w}}, g_{w})\\
		&\geq \frac{L^2}{(1+dM^2)^{\frac{\alpha+\beta}{2}}} \cdot \inf_{\hat{w}} \sup_{w \in \{w^{(0)},\ldots, w^{(H)}\}} P\left(d(g_{\hat{w}}, g_{w}) \geq \frac{L^2}{(1+dM^2)^{\frac{\alpha+\beta}{2}}} \right) \succsim n^{-\frac{\alpha+\beta}{2\alpha +d}}.
	\end{align*}}

\end{proof}

\begin{proof}[Proof of Theorem~\ref{thm:deep-relu-learn}] \quad
	We outline the key steps of the proof. Due to the approximation property of the network class $\F_D$ to $W^{\beta, \infty}(1)$, and $\mu_G$ to $W^{\alpha, \infty}(1)$, we know that
	\begin{align*}
		\max_{f \in W^{\beta, \infty}(1)} \min_{f' \in \F_D} \| f - f' \|_\infty \leq \epsilon, \\
		\min_{\mu \in \mu_G} d_{\F_D}(\mu, \nu) = \sup_{f \in \F_D} \| f\|_{1} \| \mu - \nu \|_{\infty} \leq M \epsilon
	\end{align*}
	because $F_D \subseteq W^{0, \infty}(M)$.
	Apply the oracle inequality \eqref{eq:oracle-2}, one has
	\begin{align*}
		d_{W^{\beta, \infty}(1)}(\tilde{\mu}_n, \nu) &\leq \min_{\mu \in \mu_G} d_{\F_D}(\mu, \nu) + (1+ \| \tilde{\nu}_n \|_1) \cdot \max_{f \in W^{\beta, \infty}(1)} \min_{f' \in \F_D} \| f - f' \|_{\infty} \\
		& \quad \quad + d_{\F_D}(\nu, \hat{\nu}_n) + d_{W^{\beta, \infty}(1)}(\tilde{\nu}_n, \nu), \\
		d_{W^{\beta, \infty}(1)}(\tilde{\mu}_n, \nu) - d_{\F_D}(\nu, \tilde{\nu}_n) & \leq (M+2)\epsilon + d_{W^{\beta, \infty}(1)}(\tilde{\nu}_n, \nu).
	\end{align*}
	Here the last step we use the following fact
	\begin{align*}
		\| \tilde{\nu}_n\|_1 \leq  \| \tilde{\nu}_n - \nu \|_1 + 1 \leq \| \tilde{\nu}_n - \nu \|_2  + 1   \leq 1 + L\sqrt{C} n^{-\frac{\alpha}{2(\alpha+\beta)+d}} \leq 2.
	\end{align*}

	It is clear that for $k \in \mathbb{N}$, $W^{k, \infty}(1)$ is a subspace of $W^{k, 2}(r_k)$ under some scaling $r_k$, in the following sense that
	\begin{align*}
		r_k := \sqrt{1+k d^k}, ~~W^{k, \infty}(1) \subset W^{k, 2}(r_k).
	\end{align*}
	Therefore for any $\nu \in W^{\alpha, \infty}(1)$,
	\begin{align*}
		& \E d_{W^{\beta, \infty}(1)}(\tilde{\nu}_n, \nu) \leq \E d_{W^{\beta, 2}(r_\beta)}(\tilde{\nu}_n, \nu) \\
		&\leq \sup_{\nu \in W^{\alpha, 2}(r_\alpha)} \E d_{W^{\beta, 2}(r_\beta)}(\tilde{\nu}_n, \nu) \precsim \min\left\{ n^{ - \frac{\alpha+\beta}{2(\alpha+\beta)+d}}, n^{-\frac{\beta}{d}} + n^{-\frac{1}{2}}  \right\} =: n^{-c(\alpha, \beta, d)}
	\end{align*}
	where the last line follows from Theorem~\ref{thm:non-param}. Combine with the approximation error, we reach
	\begin{align*}
		\E d_{W^{\beta, \infty}(1)}(\tilde{\mu}_n, \nu)- \E d_{\F_D}(\tilde{\nu}_n, \nu) \leq (M+2)\epsilon + C n^{-c(\alpha, \beta, d)}.
	\end{align*}
	Under the assumption that $\F_D \subseteq W^{\beta, 2}(L)$, then
	\begin{align*}
		\E d_{\F_D}(\tilde{\nu}_n, \nu) \leq \E d_{W^{\beta, 2}(L)}(\tilde{\nu}_n, \nu) \precsim n^{-c(\alpha, \beta, d)}.
	\end{align*}
	Putting things together, we have proved $\E d_{W^{\beta, \infty}(1)}(\tilde{\mu}_n, \nu) \leq C_1 \epsilon + C_2 n^{-c(\alpha, \beta, d)}.$

	For the lower bound, the construction of the mixture is slightly different to the construction in Theorem~\ref{thm:lower-bound} (this time the construction is in the spacial domain, and for the density estimation problem). Another difference here lies we need to construct hypothesis in $W^{\alpha, \infty}(1)$ (also called the H\"{o}lder space) instead of $W^{\alpha, 2}(L)$. Our construction is motivated from \cite{tsybakov2009introduction} Page 103 and Theorem 2.8, but note the difference here is that we need to make sure each hypothesis is indeed a valid density function in $W^{\alpha, \infty}(1)$ in $\mathbb{R}^d$.
	Choose a kernel function $K(x) = a \exp(-\frac{1}{1 - 4x^2}) I(|x|<1/2)$ for some small $a>0$ to ensure that $K(x) \in W^{\alpha \vee \beta, \infty}(1)$. Let $m$ be a parameter to be determined later, and denote $h_n = 1/m$.

	Define the hypothesis class to be
	\begin{align*}
		\Omega_\alpha = \left\{ g_w(x) = \frac{1}{1+c_w}\left( 1 + \sum_{\xi \in [m]^d} w_\xi h_n^{\alpha} \varphi_\xi(x) \right), w \in \{0,1\}^{m^d} \right\} , \\
		\Lambda_\beta = \left\{ f_v(x) = \sum_{\xi \in [m]^d} v_\xi h_n^{\beta} \varphi_\xi(x), v \in \{-1,1\}^{m^d} \right\},
	\end{align*}
	where
	\begin{align*}
 	   \varphi_\xi(x) &=  \prod_{i=1}^d K\left(\frac{x_i - \frac{\xi_i - 1/2}{m}}{h_n} \right), ~~\text{$h_n = 1/m$},\\
	   c_w &= \int \sum_{\xi \in [m]^d} w_\xi h_n^{\alpha} \varphi_\xi(x)  dx.
	\end{align*}
	Let's bound $c_w$, for any $w$
	\begin{align*}
		0\leq c_w = \sum_{\xi \in [m]^d} \int  w_\xi h_n^{\alpha} \varphi_\xi(x)  dx \leq \sum_{\xi \in [m]^d} \int  h_n^{\alpha + d} \prod_{i=1}^d K\left(u_i \right) du \leq m^d h_n^{\alpha + d} a^d \leq a^d h_n^{\alpha} \ll 1/100,
	\end{align*}
	and $\| (g_w(x) - g_{0}(x))/g_w(x)\|_{\infty} \leq c_w + a^d h_n^{\alpha} \ll 1/50$.
	It is easy to verify (1) $\Omega_\alpha \subset W^{\alpha, \infty}(L)$ for some $L$, and each element in the set is a valid density; (2) $\Lambda_\beta \subset W^{\beta, \infty}(1)$, as for any multi-index $\gamma$,
	\begin{align*}
		\| D^{(\gamma)} g_w(x) \|_{\infty} \precsim h_n^{\alpha - |\gamma|}, ~~ \| D^{(\gamma)} f_v(x) \|_{\infty} \precsim h_n^{\beta - |\gamma|}.
	\end{align*}

	To apply the Fano's Lemma, let's first show that the hypothesis are separated in the evaluation metric. Let's use the same subset $\{ w^{(0)}, \ldots, w^{(H)} \} \subset \{0,1\}^{h}$ claimed by the Varshamov-Gilbert bound
	\begin{align*}
		&\rho(w^{(j)}, w^{(k)}) \geq \frac{h}{8}, ~\forall~j, k\in [H],~ j\neq k, \\
		&\log H \geq \frac{h}{8} \log 2.
	\end{align*}
	In our case $h = m^d$.
	For the loss function, any $w, w' \in \{ w^{0}, \ldots, w^{H} \}$
	\begin{align*}
		d(g_w, g_{w'}) &:= \sup_{f \in W^{\beta, \infty}(1)} \langle f, g_w \rangle - \langle f, g_{w'} \rangle \geq \sup_{f \in \Lambda_{\beta}} \langle f, g_w - g_{w'} \rangle \\
		& = \sup_{v} h_n^{\alpha +\beta} \sum_{\xi\in [m]^d} v_{\xi} (w_{\xi} - w_{\xi}') \int  \varphi_\xi^2(x)  dx \\
		& =  h_n^{\alpha +\beta + d} \sum_{\xi \in [m]^d} I(w_\xi \neq w_{\xi}') \int \prod_{i=1}^d K^2\left(u_i \right) du  \geq c h_n^{\alpha +\beta + d} m^d \asymp h_n^{\alpha +\beta}.
	\end{align*}

	Now let's show that based $n$ i.i.d. data generated from density $g_w(x)$, it is hard to distinguish the hypothesis.
	Note that for $|x| < 1/50$, we know that $\log(1+x) \geq x - x^2$. Therefore
	\begin{align*}
		D_{\rm KL}(P_{w^{(j)}}^{\otimes n}, P_{w^{(0)}}^{\otimes n}) & = n D_{\rm KL}(P_{w^{(j)}}, P_{w^{(0)}}) \\
		& = n \int  - \log \left( 1 + \frac{g_{0} - g_{w^{(j)}}}{g_{w^{(j)}}} \right) g_{w^{(j)}} dx \\
		& \leq n \int \frac{[g_{0} - g_{w^{(j)}}]^2}{g_{w^{(j)}}} dx \leq 1.01 n \sum_{\xi \in [m]^d} \int  h_n^{2\alpha} \varphi_\xi^2(x)  dx \\
		& \leq 1.01 n \sum_{\xi \in [m]^d} \int  h_n^{2\alpha+d} \prod_{i=1}^d K^2\left(u_i \right) du
		 \precsim n h_n^{2\alpha +d} m^d.
	\end{align*}
	Therefore if we choose $m \asymp n^{-\frac{1}{2\alpha + d}}$, we know
	$\frac{1}{H} \sum_{j=1}^H D_{\rm KL}(P_{w^{(j)}}^{\otimes n}, P_{w^{(0)}}^{\otimes n}) \leq c \log H = c' m^d$.
	Using the Fano's Lemma, the lower bound for density estimation is of the order $h_n^{\alpha+\beta} = n^{-\frac{\alpha+\beta}{2\alpha+d}}$, as
	\begin{align*}
		\inf_{\nu_n} \sup_{d\nu/dx \in W^{\alpha, \infty}(L)}\E d_{W^{\beta, \infty}(1)}(\nu_n, \nu) &= \inf_{\hat{g}} \sup_{g \in W^{\alpha,\infty}(L)} \E \sup_{f \in W^{\beta,\infty}(1)} \langle f, \hat{g} - g \rangle   \\
		&\geq \inf_{\hat{w}} \sup_{w \in \{w^{(0)},\ldots, w^{(H)}\}} \E d(g_{\hat{w}}, g_{w})\\
		&\geq C_3 \left( n^{-\frac{\alpha+\beta}{2\alpha + d}} + n^{-\frac{1}{2}}\right).
	\end{align*}
\end{proof}

\begin{proof}[Proof of Theorem~\ref{thm:relu-discrimin}] \quad
Claim that for any $f(x) \in \F^{(\ell)}(V)$, $f$ is $V^{\ell}$-Lipschitz with respect to the $\ell_\infty$-metric.
		Prove through induction. When $\ell = 1$, $| f(x) - f(y) | = |\sum_j w_j (x_j - y_j)| \leq \| w\|_1 \|x - y \|_{\infty} \leq V\|x - y \|_{\infty}$. Assume for general $\ell$, the claim is true, let's prove for $f \in \F^{(\ell+1)}(V)$
		\begin{align*}
			|f(x) - f(y)| &=  \left| \sum_j w_j \left[ \sigma (f_j(x)) - \sigma(f_j(y)) \right] \right| \\
			&\leq \| w\|_1 \max_j \| \sigma (f_j(x)) - \sigma(f_j(y))  \| \\
			& \leq V \max_j | f_j(x) - f_j(y) | \leq V \cdot V^{\ell} \| x- y \|_{\infty} = V^{\ell+1} \| x- y \|_{\infty}.
		\end{align*}

	Observe that
		$\forall f \in \F^{\ell}(V)$, the following inequality holds
		\begin{align*}
			\max_{|\alpha| \leq 1}\| D^{\alpha} f \|_{\infty} \leq V^{\ell}, \\
			\left( \sum_{|\alpha| \leq 1} \| D^{\alpha} f \|_2^2 \right)^{1/2} \leq (d+1)^{\frac{1}{2}} V^{\ell}.
		\end{align*}
		In other words, $f \in \F_D \subset W^{1, 2}\left( \sqrt{d+1} V^{\ell}  \right)$.

	Now we know that $d\nu(x)/dx \in W^{\alpha,2}(L)$ and $f \in \F_D \subset W^{1, 2}\left( \sqrt{d+1} V^{\ell}  \right)$. Due to the oracle inequality \ref{thm:oracle}, we know
	\begin{align*}
		\E d_{\F_D}(\tilde{\mu}_n, \nu) -  \min_{\mu \in \mu_G} d_{\F_D}(\mu, \nu) \leq 2 \E d_{\F_D} (\nu, \tilde{\nu}_n) \leq 2 \E d_{W^{1, 2}\left( \sqrt{d+1} V^{\ell}  \right)} (\nu, \tilde{\nu}_n).
	\end{align*}
	Apply Theorem~\ref{thm:non-param} (Eqn.~\eqref{eq:l_a_l_b}) with $\beta = 1$, $L_\alpha = L$, $L_\beta = \sqrt{d+1} V^{\ell}$, we can complete the proof.
\end{proof}

\begin{lem}[Relative Ordering]
	\label{lem:rela.order}
	Consider the following two function spaces on $\mathbb{R}^d$
	\begin{align*}
		\F_{\rm Lip,\ell_\infty}(L, M) := \{ f: |f(x) - f(y)| \leq L \|x - y \|_{\infty},~\forall x, y, ~\text{and}~ \| f\|_{\infty} \leq M \},\\
		\F_{\rm Lip, \ell_2}(L, M) : = \{ f: |f(x) - f(y)| \leq L \|x - y \|_{2},~ \forall x, y,~\text{and}~ \| f\|_{\infty} \leq M  \}.
	\end{align*}
	The following holds true
	\begin{align*}
		\F_{\rm Lip,\ell_\infty}(L, M) \subseteq  \F_{\rm Lip,\ell_2}(L, M) \subseteq W^{1, \infty}(L \vee M),\\
		\F_{\rm Lip,\ell_\infty}(dL, L) \supseteq  \F_{\rm Lip,\ell_2}(\sqrt{d}L, L) \supseteq W^{1, \infty}(L).
	\end{align*}
\end{lem}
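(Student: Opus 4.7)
The plan is to deduce both chains of inclusions essentially from the elementary vector-norm comparison on $\mathbb{R}^d$,
\[
\|x-y\|_\infty \;\leq\; \|x-y\|_2 \;\leq\; \sqrt{d}\,\|x-y\|_\infty,
\]
together with the standard equivalence between Lipschitz continuity and essentially bounded weak gradient. I will treat the two chains in turn.

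For the first chain, the inclusion $\F_{\rm Lip,\ell_\infty}(L,M)\subseteq \F_{\rm Lip,\ell_2}(L,M)$ is immediate from $\|x-y\|_\infty\leq\|x-y\|_2$: any $f$ that is $L$-Lipschitz with respect to the $\ell_\infty$ distance satisfies $|f(x)-f(y)|\leq L\|x-y\|_\infty\leq L\|x-y\|_2$, while the uniform bound $\|f\|_\infty\leq M$ is preserved verbatim. For the second inclusion $\F_{\rm Lip,\ell_2}(L,M)\subseteq W^{1,\infty}(L\vee M)$, I would invoke the Rademacher/Serrin characterization: a function that is $L$-Lipschitz in the Euclidean norm has a weak gradient with $\|\nabla f\|_\infty\leq L$, so that each partial derivative $\|D^{(e_i)}f\|_\infty\leq L$; combining with $\|D^{(0)}f\|_\infty=\|f\|_\infty\leq M$ yields $\max_{|\alpha|\leq 1}\|D^{(\alpha)}f\|_\infty\leq L\vee M$, which is exactly membership in $W^{1,\infty}(L\vee M)$.

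For the second chain, I would handle the outermost inclusion first: take $f\in W^{1,\infty}(L)$, so that $\|f\|_\infty\leq L$ and $\|\partial_i f\|_\infty\leq L$ for each $i\in[d]$. Then $\|\nabla f(x)\|_2\leq\sqrt{d}\,L$ almost everywhere, and integrating along the segment joining $x$ and $y$ (via the fundamental theorem of calculus for Sobolev functions) gives $|f(x)-f(y)|\leq\sqrt{d}\,L\,\|x-y\|_2$; together with $\|f\|_\infty\leq L$ this places $f$ in $\F_{\rm Lip,\ell_2}(\sqrt{d}L,L)$. The remaining inclusion $\F_{\rm Lip,\ell_2}(\sqrt{d}L,L)\subseteq\F_{\rm Lip,\ell_\infty}(dL,L)$ follows directly from $\|x-y\|_2\leq\sqrt{d}\,\|x-y\|_\infty$: chaining the two norm inequalities multiplies the Lipschitz constant by an extra factor $\sqrt{d}$, giving the claimed constant $dL$, while the sup-norm bound is unchanged.

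The only step that is not purely mechanical is the passage between Lipschitz continuity in the Euclidean norm and the $L^\infty$ bound on weak derivatives used in both chains; this is the main (and well-known) technical point, and I would cite or briefly sketch the standard argument via mollification and the a.e.-differentiability theorem of Rademacher. Everything else is a direct application of the two sandwich inequalities on $\mathbb{R}^d$ relating $\ell_2$ and $\ell_\infty$.
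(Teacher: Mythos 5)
Your proposal is correct and follows essentially the same route as the paper: both arguments reduce to the vector-norm comparisons $\|x-y\|_\infty \leq \|x-y\|_2 \leq \sqrt{d}\,\|x-y\|_\infty$ (and similarly with $\ell_1$) together with the standard equivalence between $W^{1,\infty}$ membership and Lipschitz continuity. The paper phrases the Sobolev side via the $\ell_1$-Lipschitz characterization $|f(x)-f(y)|\leq L\|x-y\|_1$ while you pass through the pointwise gradient bounds $\|\partial_i f\|_\infty \leq L$ and $\|\nabla f\|_2\leq\sqrt{d}L$, but these are two phrasings of the same fact and lead to identical constants.
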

\begin{proof} \quad
	Without loss of generality assume the function is differentiable. Then
	$$W^{1, \infty}(L) = \{f: |f(x) - f(y)| \leq L \|x - y \|_{1},~\forall x, y, ~\text{and}~ \| f\|_{\infty} \leq L \}.$$
	Observe that $\|x - y \|_1 \geq \| x- y \|_{2} \geq \| x- y \|_{\infty}$. Then $|f(x) - f(y)| \leq L \|x - y \|_{\infty} \leq L \|x - y \|_{2} \leq L \| x - y\|_1$.
	For the other side of the inequality, $\|x - y \|_1 \leq \sqrt{d} \| x- y \|_{2} \leq d \| x- y \|_{\infty}$, hence
	$|f(x) - f(y)| \leq L \|x - y \|_{1} \leq L\sqrt{d} \|x - y \|_{2} \leq Ld \| x - y\|_\infty$.
\end{proof}

\section*{Acknowledgement}
The author thanks Sasha Rakhlin for helpful discussions.

\bibliographystyle{plainnat}
\bibliography{ref}

\begin{thebibliography}{18}
\providecommand{\natexlab}[1]{#1}
\providecommand{\url}[1]{\texttt{#1}}
\expandafter\ifx\csname urlstyle\endcsname\relax
  \providecommand{\doi}[1]{doi: #1}\else
  \providecommand{\doi}{doi: \begingroup \urlstyle{rm}\Url}\fi

\bibitem[Anthony and Bartlett(2009)]{anthony2009neural}
Martin Anthony and Peter~L Bartlett.
\newblock \emph{Neural network learning: Theoretical foundations}.
\newblock cambridge university press, 2009.

\bibitem[Arjovsky and Bottou(2017)]{arjovsky2017towards}
Martin Arjovsky and L{\'e}on Bottou.
\newblock Towards principled methods for training generative adversarial
  networks.
\newblock \emph{arXiv preprint arXiv:1701.04862}, 2017.

\bibitem[Arjovsky et~al.(2017)Arjovsky, Chintala, and
  Bottou]{arjovsky2017wasserstein}
Martin Arjovsky, Soumith Chintala, and L{\'e}on Bottou.
\newblock Wasserstein gan.
\newblock \emph{arXiv preprint arXiv:1701.07875}, 2017.

\bibitem[Arora and Zhang(2017)]{arora2017gans}
Sanjeev Arora and Yi~Zhang.
\newblock Do gans actually learn the distribution? an empirical study.
\newblock \emph{arXiv preprint arXiv:1706.08224}, 2017.

\bibitem[Arora et~al.(2017{\natexlab{a}})Arora, Ge, Liang, Ma, and
  Zhang]{arora2017generalization}
Sanjeev Arora, Rong Ge, Yingyu Liang, Tengyu Ma, and Yi~Zhang.
\newblock Generalization and equilibrium in generative adversarial nets (gans).
\newblock \emph{arXiv preprint arXiv:1703.00573}, 2017{\natexlab{a}}.

\bibitem[Arora et~al.(2017{\natexlab{b}})Arora, Risteski, and
  Zhang]{arora2017theoretical}
Sanjeev Arora, Andrej Risteski, and Yi~Zhang.
\newblock Theoretical limitations of encoder-decoder gan architectures.
\newblock \emph{arXiv preprint arXiv:1711.02651}, 2017{\natexlab{b}}.

\bibitem[Canas and Rosasco(2012)]{canas2012learning}
Guillermo Canas and Lorenzo Rosasco.
\newblock Learning probability measures with respect to optimal transport
  metrics.
\newblock In \emph{Advances in Neural Information Processing Systems}, pages
  2492--2500, 2012.

\bibitem[Dziugaite et~al.(2015)Dziugaite, Roy, and
  Ghahramani]{dziugaite2015training}
Gintare~Karolina Dziugaite, Daniel~M Roy, and Zoubin Ghahramani.
\newblock Training generative neural networks via maximum mean discrepancy
  optimization.
\newblock \emph{arXiv preprint arXiv:1505.03906}, 2015.

\bibitem[Goodfellow et~al.(2014)Goodfellow, Pouget-Abadie, Mirza, Xu,
  Warde-Farley, Ozair, Courville, and Bengio]{goodfellow2014generative}
Ian Goodfellow, Jean Pouget-Abadie, Mehdi Mirza, Bing Xu, David Warde-Farley,
  Sherjil Ozair, Aaron Courville, and Yoshua Bengio.
\newblock Generative adversarial nets.
\newblock In \emph{Advances in neural information processing systems}, pages
  2672--2680, 2014.

\bibitem[Hornik et~al.(1989)Hornik, Stinchcombe, and
  White]{hornik1989multilayer}
Kurt Hornik, Maxwell Stinchcombe, and Halbert White.
\newblock Multilayer feedforward networks are universal approximators.
\newblock \emph{Neural networks}, 2\penalty0 (5):\penalty0 359--366, 1989.

\bibitem[Li et~al.(2015)Li, Swersky, and Zemel]{li2015generative}
Yujia Li, Kevin Swersky, and Rich Zemel.
\newblock Generative moment matching networks.
\newblock In \emph{Proceedings of the 32nd International Conference on Machine
  Learning (ICML-15)}, pages 1718--1727, 2015.

\bibitem[Liu et~al.(2017)Liu, Bousquet, and Chaudhuri]{liu2017approximation}
Shuang Liu, Olivier Bousquet, and Kamalika Chaudhuri.
\newblock Approximation and convergence properties of generative adversarial
  learning.
\newblock \emph{arXiv preprint arXiv:1705.08991}, 2017.

\bibitem[Nemirovski(2000)]{nemirovski2000topics}
Arkadi Nemirovski.
\newblock Topics in non-parametric.
\newblock \emph{Ecole d'Et{\'e} de Probabilit{\'e}s de Saint-Flour},
  28:\penalty0 85, 2000.

\bibitem[Nickl and P{\"o}tscher(2007)]{nickl2007bracketing}
Richard Nickl and Benedikt~M P{\"o}tscher.
\newblock Bracketing metric entropy rates and empirical central limit theorems
  for function classes of besov-and sobolev-type.
\newblock \emph{Journal of Theoretical Probability}, 20\penalty0 (2):\penalty0
  177--199, 2007.

\bibitem[S{\o}nderby et~al.(2016)S{\o}nderby, Caballero, Theis, Shi, and
  Husz{\'a}r]{sonderby2016amortised}
Casper~Kaae S{\o}nderby, Jose Caballero, Lucas Theis, Wenzhe Shi, and Ferenc
  Husz{\'a}r.
\newblock Amortised map inference for image super-resolution.
\newblock \emph{arXiv preprint arXiv:1610.04490}, 2016.

\bibitem[Tsybakov(2009)]{tsybakov2009introduction}
Alexandre~B Tsybakov.
\newblock \emph{Introduction to nonparametric estimation}.
\newblock Springer Series in Statistics. Springer, New York, 2009.

\bibitem[Wassermann(2006)]{wassermann2006all}
Larry Wassermann.
\newblock \emph{All of nonparametric statistics}.
\newblock Springer Science+ Business Media, New York, 2006.

\bibitem[Yarotsky(2017)]{yarotsky2016error}
Dmitry Yarotsky.
\newblock Error bounds for approximations with deep relu networks.
\newblock \emph{Neural Networks}, 94:\penalty0 103--114, 2017.

\end{thebibliography}

\end{document}